\documentclass{article}
\newif\ifarxiv
\arxivtrue

\ifarxiv
\usepackage[preprint]{neurips_2026}
\else
\usepackage[main]{neurips_2026}
\fi

\usepackage[utf8]{inputenc} 
\usepackage[T1]{fontenc}    
\usepackage{hyperref}       
\usepackage{url}            
\usepackage{booktabs}       
\usepackage{subcaption}     
\usepackage{amsfonts}       
\usepackage{nicefrac}       
\usepackage{microtype}      
\usepackage{xcolor}         

\usepackage{amsmath}
\usepackage{amssymb}
\usepackage{amsthm}
\usepackage{geometry}
\usepackage{xcolor}
\usepackage{xspace}
\usepackage{amssymb}
\usepackage{enumitem}

\renewcommand{\tilde}{\widetilde}

\newtheorem{theorem}{Theorem}
\newtheorem{lemma}{Lemma}

\theoremstyle{definition}

\newcommand{\R}{\mathbb{R}}
\newcommand{\E}{\mathbb{E}}
\newcommand{\Pbb}{\mathbb{P}}
\newcommand{\vnmse}{\operatorname{vNMSE}}

\newcommand{\Var}{\operatorname{Var}}
\newcommand{\Cov}{\operatorname{Cov}}

\newcommand{\set}[1]{\left\{#1\right\}}
\newcommand{\norm}[1]{\left\lVert #1\right\rVert}

\newif\ifcomms
\commstrue

\newcommand{\NH}{\tilde{H}\xspace}
\usepackage{cleveref}

\title{Quantizing With Randomized Hadamard Transforms: Efficient Heuristic Now Proven}

\author{%
  Ran Ben-Basat\\
  UCL and Broadcom\\
  \And
  William Kuszmaul\\
  Carnegie Mellon University\\
  \And
  Michael Mitzenmacher\\
  Harvard University\\
  \And
  Amit Portnoy\\
  Microsoft\\
  \And
  Shay Vargaftik\\
  VMware Research by Broadcom\\
}

\ifarxiv
\makeatletter
\def\@noticestring{}
\makeatother
\fi

\begin{document}

\maketitle

\begin{abstract}
Uniform random rotations (URRs) are a common preprocessing step in modern quantization approaches used for gradient compression, inference acceleration, KV-cache compression, model weight quantization, and approximate nearest-neighbor search in vector databases. In practice, URRs are often replaced by randomized Hadamard transforms (RHTs), which preserve orthogonality while admitting fast implementations. The remaining issue is the performance for worst-case inputs.  
With a URR, each coordinate is individually distributed as a shifted beta distribution, which converges to a Gaussian distribution in high dimensions.  Generally, one RHT is not suitable in the worst case, as individual coordinates can be far from these distributions.  
We show that after composing two RHTs on any $d$-sized input vector, the marginal distribution of every fixed coordinate of the normalized rotated vector is within $\mathcal O(d^{-1/2})$ of a standard Gaussian both in Kolmogorov distance and in $1$-Wasserstein distance. We then plug these bounds into the analyses of modern compression schemes, namely DRIVE and QUIC-FL, and show that two RHTs achieve performance that asymptotically matches URRs. 

However, while two RHTs suffice for scalar quantization, they may be insufficient for Vector Quantization (VQ), which often requires weak correlation across fixed-size blocks of coordinates (as opposed to only marginal distribution convergence for single coordinates). We prove that a composition of three RHTs leads to decaying coordinate covariance. This ensures that any fixed, bounded, multi-dimensional VQ codebook optimized for URRs has the same expected error when using three RHTs, up to an additive term that asymptotically vanishes with the dimension.

Finally, because practical inputs are rarely adversarial, we propose a linear-time $\mathcal{O}(d)$ check on the input's moments to dynamically adapt the number of RHTs used at runtime to improve performance.

\end{abstract}

\section{Introduction} 

A foundational technique in quantization is applying a uniform random rotation (URR) to the input vectors beforehand, which ensures the vector's $\ell_2$ norm is evenly distributed across all coordinates.
Moreover, because individual coordinates of a rotated vector rapidly converge to a Gaussian distribution as the vector's dimension grows, rotations enable stable, nearly dimension-free scalar quantization designs. This robust statistical smoothing has recently found numerous applications, including Distributed Mean Estimation (DME) and gradient compression in federated learning (e.g., QUIC-FL~\cite{ben2024accelerating}), 
\ifarxiv
distributed training (e.g.,~\cite{han2024beyond,han2026dynamiq}),  privacy (e.g., ScionFL~\cite{ben2024scionfl}),
parameter quantization (e.g., Quartet-2 via MS-EDEN~\cite{quartet2026}) and activation quantization (e.g., HACK~\cite{zhang2025hack}),
\else
parameter quantization (e.g., Quartet-2 via MS-EDEN~\cite{quartet2026}),
\fi
 database vector search (e.g., RaBitQ~\cite{gao2024rabitq}), KV cache quantization 
\ifarxiv
(e.g., TurboQuant~\cite{zandieh2025turboquant} and EDEN~\cite{ben2026note}), 
\else
(e.g., TurboQuant~\cite{zandieh2025turboquant}), privacy (e.g., ScionFL~\cite{ben2024scionfl}),
\fi
and advanced post-training quantization frameworks~(e.g.,~HIGGS~\cite{malinovskii2025higgs}).

While URRs are conceptually and mathematically attractive, they take $O(d^2)$ operations to compute for $d$-dimensional vectors and thus are often too slow. This is why the literature repeatedly suggests replacing them with randomized Hadamard transforms (RHTs), which are structured random rotations that can be computed more efficiently, i.e., in $O(d \log d)$ operations. RHTs have been suggested for many quantization algorithms, e.g., 
\ifarxiv
\cite{ashkboos2024quarot,basat2024optimal,ben2025better,ben2024scionfl,caldas2018expanding,dorfman2023docofl, gao2025practical,konevcny2016federated, ordentlich2025optimal, tseng2024quip, tseng2024qtip,vargaftik2022eden, yang2025raana, zandieh2025turboquant}.
\else
\cite{ashkboos2024quarot,caldas2018expanding, gao2025practical,konevcny2016federated,li2024thc, ordentlich2025optimal, tseng2024quip, tseng2024qtip,vargaftik2022eden, yang2025raana, zandieh2025turboquant}. 
\fi
More broadly, RHTs are also standard structured surrogates for dense random matrices in fast Johnson-Lindenstrauss embeddings and in later kernel-approximation and distance-estimation results. The algorithmic advantage is immediate: the transform is orthogonal, \mbox{fast, memory efficient, and GPU-friendly.}

At a theoretical level, however, the delicate issue is that RHTs are more difficult to work with, as they do not fully provide the same properties as a URR.  
A single RHT may not be suitable for worst-case inputs;  for example, the unit vector $(1/\sqrt{2},1/\sqrt{2},0,0,\ldots,0)$ after an RHT will have all coordinates take on values in the set $\{0,\pm\sqrt{2/d}\}$, rather than a (near-)Gaussian coordinate law. Intuitively, even with a smoother initial vector, there can be a gap in terms of readily obtaining results for applications that utilize the coordinate marginal distribution of a random rotation.     

In this paper, we show that, for scalar distributions, composing two successive RHTs eliminates such bad cases. Specifically, after one RHT, the resulting intermediate vector has a small expected third moment. Given this intermediate vector, each coordinate of the final vector becomes an ordinary Rademacher sum. We then apply the appropriate Berry-Esseen inequalities, yielding the necessary bounds for comparing these coordinate distributions to a Gaussian. We analyze this comparison using two metrics: the Kolmogorov distance, suited to thresholded and bounded-variation functionals, and the $1$-Wasserstein distance, suited to Lipschitz functionals. With both estimates established, we then apply them to obtain new formal guarantees for recent scalar quantization (that quantize each entry separately) schemes, namely DRIVE~\cite{vargaftik2021drive} and QUIC-FL~\cite{ben2024accelerating}. We expect similar guarantees to be derivable for other RHT-based scalar quantizers.

Additionally, we turn to the problem of Vector Quantization (VQ), where blocks of entries are jointly quantized. In VQ,
having a marginal distribution that is approximately Gaussian is insufficient, as we need weak correlation among different coordinates within each block. We show that this can be achieved by composing three successive RHTs.

Finally, while composing RHTs restores strict worst-case guarantees, natural data distributions are rarely perfectly adversarial. A corollary of our general statements is that if an initial vector already exhibits suitably small values for specific norms, additional RHT steps are redundant. We establish that these requirements, specifically, the cubed $\ell_3$ norm dictating scalar marginals, and the squared $\ell_\infty$ norm dictating multi-dimensional cross-correlation, can be explicitly computed in $\mathcal{O}(d)$ linear time. Thus, practical systems can dynamically check an input vector to determine whether 1, 2, or 3 RHTs are strictly required, preserving empirical speed without sacrificing theoretical bounds.

\ifarxiv
Our presentation is intentionally modular. Specifically, we first state general results for composing two RHTs, without application-specific definitions. We then introduce the estimators and quantizers needed for DRIVE~\cite{vargaftik2021drive} and QUIC-FL~\cite{ben2024accelerating}. We take a similar approach when analyzing compositions of three RHTs. This keeps the core probabilistic statements separate from the downstream subtleties of each method, which we hope provides an easier path for future applications. 
\else
Our presentation is modular: we state general results for composing RHTs before embarking on application-specific analysis. This keeps the core probabilistic statements separate from the downstream subtleties of each application, which we expect to provide an easier path for future applications. 

\fi

\ifarxiv
\vspace{-1mm}
\fi
Our core contributions can be summarized as follows:
\ifarxiv
\vspace{-2mm}
\fi
\begin{itemize}
    \item \textbf{Universal Scalar Marginals (2-RHT):} We prove that for every arbitrary input vector and every fixed coordinate, two RHTs provide a uniform $\mathcal{O}(d^{-1/2})$ normal additive approximation in both the Kolmogorov ($d_K$) and 1-Wasserstein ($W_1$) metrics. 

    \item \textbf{Application of 2-RHT Bounds to Scalar Quantizers:}
    \begin{itemize}
        \item \textbf{Formal Guarantees for DRIVE \cite{vargaftik2021drive}~(\Cref{tab:intro-drive}):} We plug these scalar $W_1$ bounds into the analysis of DRIVE, recovering the URR-based guarantees for both the biased and the unbiased estimator up to a decaying $\mathcal{O}(d^{-1/2})$ term.
        \item \textbf{Formal Guarantees for QUIC-FL \cite{ben2024accelerating}~(\Cref{tab:intro-quic}):} We apply the $d_K$ bound to restore the exact URR-based transmission bandwidth threshold up to an additive $\mathcal{O}(d^{-1/2})$ decaying term,  eliminating the $3.2\times$ multiplicative penalty of a single RHT.
    \end{itemize}
    
    
    \item \textbf{Decorrelation and VQ (3-RHT):} 
    %
    We prove that two RHTs are insufficient for decorrelation of small coordinate blocks, whereas a composition of three RHTs provides the weak dependence required for VQ. This allows any fixed, bounded URR or Gaussian codebook to yield the same expected-error limit, up to a decaying additive term.
    
    \item \textbf{Adaptive $\mathcal{O}(d)$ Linear-Time Check:} We translate our theoretical constraints into a simple linear-time method that decides on the required number of RHTs  to achieve theoretical guarantees based on the $\ell_3$ and $\ell_\infty$ norms.
    
\end{itemize}

\begin{table}[t]
\centering
\small
\setlength{\tabcolsep}{4pt}
\renewcommand{\arraystretch}{0.92}
\begin{tabular*}{\linewidth}{@{\extracolsep{\fill}}lllll@{}}
\toprule
& \multicolumn{2}{c}{\textbf{(a) DRIVE's vNMSE (\S~\ref{sec:drive})}} & \multicolumn{2}{c}{\textbf{(b) QUIC-FL's guarantees (\S\ref{sec:quic})}} \\
\cmidrule(lr){2-3}\cmidrule(l){4-5}
& Biased & Unbiased & $(E_1,E_2,E_3,E_4$) & Sent frac. \\
\midrule
URR & $1-2/\pi$ & $\pi/2-1+\varepsilon_d$ & $(1.520,0.223,0.044,0.0098)$ & $p$ \\
1-RHT & $0.5$ & none & $(4.831,0.692,0.131,0.0272)$ & $3.2 {\cdot} p$ \\
2-RHT \textbf{(new)}& $1-2/\pi+\varepsilon_d$ & $\pi/2-1+\varepsilon_d$ & $(1.520,0.223,0.044,0.0098)+\varepsilon_d$ & $ p+\varepsilon_d$ \\
\bottomrule
\end{tabular*}
\vspace{1mm}
\caption{
A summary of DRIVE's~\cite{vargaftik2021drive} and QUIC-FL's~\cite{ben2024accelerating} guarantees with the known URR ($\mathcal O(d^2)$ time) and 1-RHT bounds ($\mathcal O(d\log d)$ time), and the new 2-RHT ($\mathcal O(d\log d)$ time) bounds. Here, $\varepsilon_d=\mathcal O(d^{-1/2})$. In (b), $E_b$, for $b\in\set{1,2,3,4}$, denotes QUIC-FL's vNMSE at bit budget $b$.
}
\vspace*{-4mm}
\label{tab:intro-applications}
\begin{subcaptiongroup}
\phantomsubcaption\label{tab:intro-drive}
\phantomsubcaption\label{tab:intro-quic}
\end{subcaptiongroup}
\end{table}

\section{Notations and definitions}
\vspace*{-2mm}

\textbf{Randomized Hadamard Transform (RHT).}
Fix $d=2^m$. Let $H$ be a Hadamard matrix~\cite{horadam2012hadamard}. Thus, $H\in\{\pm1\}^{d\times d}$ and $HH^\top=dI_d$. Additionally, let $\NH=\frac{1}{\sqrt d}H$. Thus, $\NH$ is orthogonal.


For $k\ge 1$, let $R_k = \NH D_k \NH D_{k-1}\cdots \NH D_1$, 
where $D_1,D_2,\dots, D_k$ are diagonal matrices with i.i.d. $
\pm 1$ Rademacher entries.
We call $R_k$ a composition of $k$ RHTs.
For convenience, we denote $R_0=I_d$. 

\textbf{Kolmogorov distance.} Given two distributions whose CDFs are $P$, $Q$, their Kolmogorov distance is defined as $d_K(P,Q) = \sup_{x\in\mathbb R} |P(x) - Q(x)|$. 

\textbf{1-Wasserstein distance} (also known as the Earth Mover’s Distance)\textbf{.} Utilizing the Kantorovich-Rubinstein duality \cite{arjovsky2017wasserstein,chen2021stein,villani2009optimal}, we use an alternative definition of the 1-Wasserstein distance given by  
$W_1(X, Y) = \sup_{f \in \text{Lip}(1)} \Big| \mathbb{E}[f(X)] - \mathbb{E}[f(Y)] \Big|$.


\textbf{Unit sphere.} We denote by $S^{d-1}$ the $(d-1)$-dimensional unit sphere, i.e., the set of all vectors $a \in \mathbb{R}^d$ such that $\norm{a}_2=1$.

\textbf{Coordinate.} For a vector $a \in \mathbb{R}^d$, we use $a_i$ and $(a)_i$ to denote its $i$'th Coordinate.

\textbf{Binary sign convention.} We use $\operatorname{sign}(z)=1$ if $z\ge 0$ and $\operatorname{sign}(z)=-1$ otherwise,
applied coordinatewise to vectors.

\textbf{Normalized input} For an input $x \in \mathbb{R}^d \setminus \{0\}$,  let $\tilde{x}=\frac{x}{\norm{x}_2}\in S^{d-1}$ be its unit direction vector.


\vspace*{-2mm}

\section{Composing Two RHTs}
\vspace*{-1mm}

We start by considering the setting of composing two RHTs, under the Kolomogrov and 1-Wasserstein distance metrics. Specifically, we are interested in the properties of $R_2 x$ for any $x \in \mathbb{R}^{d}\setminus \{0\}$. 

\vspace*{-1mm}
\subsection{Kolmogorov distance with two RHTs}
\vspace*{-1mm}

Let $a \in S^{d-1}$ and define $\rho_3(a)=\sum_{i=0}^{d-1} |a_i|^3$. Intuitively, the $\ell_3$ norm cubed $\rho_3(a)$ measures how ``flat'' the unit vector $a$ is. If the coordinates of $a$ all have size about $d^{-1/2}$, then $\rho_3(a)$ is of order $d^{-1/2}$. If one coordinate is large, then $\rho_3(a)$ is large. Thus, the analysis for two RHTs works by showing that the first RHT makes $\rho_3$ sufficiently small on average, and then that the second RHT ensures that an individual coordinate is close to a Gaussian-distributed random variable when $\rho_3(a)$ is small. We start with the following lemma. 

\begin{lemma}[Average $\ell_3$ smoothing after one RHT]\label{lem:l3}
For every $\widetilde x\in S^{d-1}$ and a random sign diagonal $D$, it holds that,
$\E\rho_3(\NH D \widetilde x) \le \frac{C_3}{\sqrt d}$ where $C_3=3^{3/4}\approx 2.2795$.
\end{lemma}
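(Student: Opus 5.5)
Each coordinate of $y=\NH D\widetilde x$ is a Rademacher sum, so the plan is to bound its third absolute moment and then sum over coordinates. Write $D=\operatorname{diag}(\epsilon_0,\dots,\epsilon_{d-1})$ with i.i.d.\ Rademacher $\epsilon_j$. Then
\[
y_i=\frac{1}{\sqrt d}\sum_{j} H_{ij}\epsilon_j\widetilde x_j .
\]
Since $H_{ij}\in\{\pm1\}$, the variables $\epsilon_j' = H_{ij}\epsilon_j$ are again i.i.d.\ Rademacher. Hence $y_i \overset{d}{=} d^{-1/2} S$ with $S=\sum_j \epsilon_j \widetilde x_j$, and $\E S^2=\norm{\widetilde x}_2^2=1$. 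By linearity,
\[
\E\rho_3(y)=\sum_i \E|y_i|^3 = d\cdot d^{-3/2}\,\E|S|^3 = \frac{\E|S|^3}{\sqrt d}.
\]
So it suffices to show $\E|S|^3\le 3^{3/4}$ uniformly over unit vectors $\widetilde x$.

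For this I would use Lyapunov/Hölder monotonicity of moments: $\E|S|^3\le (\E S^4)^{3/4}$. The fourth moment is computed exactly by expanding $\big(\sum_j\epsilon_j\widetilde x_j\big)^4$. Only terms with every index appearing an even number of times survive, so
\[
\E S^4=\sum_j \widetilde x_j^4+3\sum_{j\ne k}\widetilde x_j^2\widetilde x_k^2 = 3\Big(\sum_j\widetilde x_j^2\Big)^2-2\sum_j\widetilde x_j^4\le 3 .
\]
Therefore $\E|S|^3\le 3^{3/4}=C_3$, which gives $\E\rho_3(\NH D\widetilde x)\le C_3/\sqrt d$ as claimed.

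There is no serious obstacle. The only points needing care are the sign-absorption step, which makes every coordinate identically distributed regardless of the row of $H$, and the combinatorial count in the fourth moment (the factor $3=\binom{4}{2}/2$ for pairings). The constant $3^{3/4}$ matches the statement exactly, which suggests this is the intended route rather than a sharper Khintchine constant.
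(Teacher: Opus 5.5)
Your proposal is correct and follows essentially the same route as the paper. Both arguments absorb the Hadamard signs so that every coordinate is distributed as $d^{-1/2}\sum_j \epsilon_j \widetilde x_j$, use the exact fourth-moment identity $3(\sum_j \widetilde x_j^2)^2 - 2\sum_j \widetilde x_j^4 \le 3$, and then apply H\"older/Lyapunov to get $\E|S|^3 \le 3^{3/4}$.
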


\begin{proof}
Let $a=\NH D \widetilde x$. 
Thus, $a_1 = \frac{1}{\sqrt{d}}\sum_{i=0}^{d-1} x_i\varepsilon_i$, where the $\varepsilon_i$ are i.i.d.\ Rademacher $\pm 1$ signs (the signs of $H$ are absorbed into the Rademacher signs). Since all coordinates have the same marginal distribution, $\E\rho_3(a)=d\,\E|a_1|^3$.
Let $M=\sum_{i=0}^{d-1} x_i\varepsilon_i$.
Then $\E M^2=\norm{x}_2^2=1$, and a direct fourth-moment calculation gives
$
\E M^4
=
\sum_{i=0}^{d-1} x_i^4 + 6\sum_{i<j} x_i^2x_j^2
=
3\left(\sum_{i=0}^{d-1} x_i^2\right)^2 - 2\sum_{i=0}^{d-1} x_i^4
\le 3.$

By Hölder's inequality,
$
\E|M|^3 \le (\E M^4)^{3/4}\le 3^{3/4}=C_3.
$
Therefore,
$$
\E\rho_3(a)
=
d\,\E|a_1|^3
=
d \cdot \E |\frac{1}{\sqrt{d}}M|^3
=
d\cdot d^{-3/2}\E|M|^3
=
\frac{\E|M|^3}{\sqrt d}
\le \frac{C_3}{\sqrt d}.
 \qedhere$$
\end{proof}

We also utilize a well-known Berry-Esseen bound for non-i.i.d variables derived in \cite[Theorem 7]{tyurin2009new} which we restate here for clarity.

\textbf{Theorem 7 of \cite{tyurin2009new}}
(Berry--Esseen for non-i.i.d variables).
Let $X_0,\dots,X_{d-1}$ be independent real-valued random variables with
$\E X_i=0$ and $\sum_{i=0}^{d-1} \E X_i^2 =1$.
If $G\sim N(0,1)$, then
\[
\sup_{t\in\R}\left|\Pbb\!\left(\sum_{i=0}^{d-1} X_i\le t\right)-\Pbb(G\le t)\right|
\le 0.5606 \sum_{i=0}^{d-1} \E|X_i|^3.
\]

With these results at hand, we are now ready to prove the following theorem. 
\begin{theorem}[Kolmogorov distance with two RHTs]\label{thm:scalar-clt}
Let $x\in\R^d\setminus\{0\}$. Define
$
U(x)=\sqrt d\,{(R_2\widetilde x)_1}.
$
Then
$
d_K\bigl(U(x),G\bigr)
\le \frac{0.5606\,C_3}{\sqrt d}
\le \frac{1.28}{\sqrt d},
$
where $G\sim N(0,1)$.
Since the coordinates of $R_2x$ are identically distributed, the same estimate holds for every coordinate of $\sqrt d\,R_2\widetilde x$.
\end{theorem}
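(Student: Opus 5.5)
We condition on the first RHT and then apply the Berry--Esseen bound to the second. Write $R_2\widetilde x = \NH D_2 a$, where $a = \NH D_1 \widetilde x$. Since $\NH$ is orthogonal, $a\in S^{d-1}$ for every realization of $D_1$. Moreover, $a$ depends only on $D_1$, so it is independent of $D_2$.

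Fix a realization of $D_1$, and hence of $a$. Then
\[
U(x)=\sqrt d\,(\NH D_2 a)_1=\sum_{i=0}^{d-1} H_{1,i}\,\varepsilon_i\, a_i ,
\]
where the $\varepsilon_i$ are the diagonal entries of $D_2$. Because $H_{1,i}\in\{\pm1\}$ is deterministic, $H_{1,i}\varepsilon_i$ are again i.i.d.\ Rademacher signs. Set $X_i = H_{1,i}\varepsilon_i a_i$. These variables are independent with $\E X_i=0$, and $\sum_i \E X_i^2=\sum_i a_i^2=1$. Also $\sum_i \E|X_i|^3=\sum_i|a_i|^3=\rho_3(a)$. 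Theorem~7 of \cite{tyurin2009new} therefore gives, for every $t\in\R$,
\[
\bigl|\Pbb(U(x)\le t\mid D_1)-\Pbb(G\le t)\bigr|\le 0.5606\,\rho_3(a).
\]

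Next we remove the conditioning. For each fixed $t$, the tower property and Jensen's inequality give
\[
|\Pbb(U(x)\le t)-\Pbb(G\le t)| = \bigl|\E[\Pbb(U(x)\le t\mid D_1)-\Pbb(G\le t)]\bigr| \le 0.5606\,\E\rho_3(a).
\]
Lemma~\ref{lem:l3} bounds $\E\rho_3(a)$ by $C_3/\sqrt d$. The right-hand side is then independent of $t$, so taking the supremum over $t$ yields
\[
d_K(U(x),G)\le 0.5606\,C_3/\sqrt d.
\]
Numerically, $0.5606\cdot 3^{3/4}\approx 1.2779\le 1.28$.

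The same argument applies to any coordinate $j$: replace row $1$ of $H$ by row $j$. Each row of $H$ has $\pm1$ entries, which are absorbed into the Rademacher signs exactly as before. This gives the claim for every coordinate of $\sqrt d\,R_2\widetilde x$ without needing any separate symmetry argument.

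The main point requiring care is the averaging step. The Berry--Esseen bound holds only conditionally on $a$, and we must check that the mixture's CDF deviation is controlled by the expected conditional bound. This works because the Kolmogorov deviation at each fixed $t$ is linear in the conditional law, and the bound we average does not depend on $t$. There is also a minor check on the normalization: we need $\|a\|_2=1$ surely, not merely on average, so that the unit-variance hypothesis of the Berry--Esseen theorem holds for every realization. This is guaranteed by the orthogonality of $\NH D_1$.
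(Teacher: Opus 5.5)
Your proposal is correct and follows the paper's proof. Like the paper, you condition on $a=\NH D_1\widetilde x$, apply Tyurin's Berry--Esseen bound to the conditional Rademacher sum to get $0.5606\,\rho_3(a)$, and average using Lemma~\ref{lem:l3}. Your pointwise-in-$t$ tower-property justification and your row-by-row handling of the other coordinates spell out steps the paper states more tersely.
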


\begin{proof}

Let 
$a=\NH D_{1}\widetilde x = R_{1}\widetilde x\in S^{d-1}$.
Therefore, 
$$U(x)=\sqrt d\,{(R_2\widetilde x)_1}=\sqrt d\,{(\tilde{H}D_2R_1\widetilde x)_1}={(HD_2R_1\widetilde x)_1}=(HD_2a)_1 = \sum_{i=0}^{d-1} a_i\varepsilon_i\,\,,$$
where $\varepsilon_i$ are $D_2$'s i.i.d.\ Rademacher signs. By Theorem 7 of~\cite{tyurin2009new}, $d_K\bigl(U(x) \mid a, G\bigr)\le 0.5606\,\rho_3(a)\,\,$.
Taking expectation and applying Lemma~\ref{lem:l3} yields
$$
d_K\bigl(U(x),G\bigr) \le \E_a [d_K\bigl(U(x) \mid a, G\bigr)]
\le 0.5606\,\E[\rho_3(a)] 
\le \frac{0.5606\,C_3}{\sqrt d} \,\,.\qedhere$$
\end{proof}

\subsection{1-Wasserstein distance with two RHTs}
We utilize and restate here for clarity the following lemma.


\textbf{Lemma 2.4 of \cite{chen2021stein}}($W_1$ Berry-Esseen Method\footnote{Also appears in~\cite[Section 3]{chen2010normal}.}).
Let $X_0, \dots, X_{d-1}$ be independent zero-mean random variables with $\sum_{i=0}^{d-1} \mathbb{E}[X_i^2] = 1$. Let $G \sim \mathcal{N}(0, 1)$ be a standard Gaussian. Then:
\[
W_1\left(\sum_{i=0}^{d-1} X_i, \, G\right) \le C_W \sum_{i=0}^{d-1} \mathbb{E}\big[|X_i|^3\big]
\]
where $C_W\le 3$ is an absolute constant.

Now we move to bound the 1-Wasserstein distance.

\begin{theorem}[1-Wasserstein distance with two RHTs]\label{thm:w1-clt}
Let $x\in\R^d\setminus\{0\}$. Define again
$
U(x)=\sqrt d\,{(R_2\widetilde x)_1}.
$
Then
$
W_1\bigl(U(x),G\bigr)
\le \frac{C_{W}C_3}{\sqrt{d}},
$
where $G\sim N(0,1)$.
Since the coordinates of $R_2x$ are identically distributed, the same estimate holds for every coordinate of $U(x)=\sqrt d\,{R_2\widetilde x}$.
\end{theorem}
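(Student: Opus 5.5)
The argument mirrors the proof of Theorem~\ref{thm:scalar-clt}, with Lemma 2.4 of \cite{chen2021stein} replacing the Kolmogorov Berry--Esseen bound. Set $a=R_1\widetilde x=\NH D_1\widetilde x\in S^{d-1}$. As in that proof,
\[
U(x)=(HD_2a)_1=\sum_{i=0}^{d-1} a_i\varepsilon_i ,
\]
where the $\varepsilon_i$ are i.i.d.\ Rademacher signs coming from $D_2$ (the signs of the first row of $H$ are absorbed into them). Because $D_2$ is independent of $D_1$, the summands $X_i=a_i\varepsilon_i$ are, conditionally on $a$, independent with zero mean. Their variances sum to $\sum_i a_i^2=1$, since $\NH D_1$ is orthogonal. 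Their third moments satisfy $\E|X_i|^3=|a_i|^3$. Lemma 2.4 of \cite{chen2021stein} therefore gives the conditional bound
\[
W_1\bigl(U(x)\mid a,\,G\bigr)\le C_W\,\rho_3(a).
\]

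The one step that is not a direct copy of the Kolmogorov argument is removing the conditioning, i.e.\ showing that $W_1$ between a mixture and a fixed law is at most the average of the conditional distances. I will use the dual definition. Fix $f\in\mathrm{Lip}(1)$; then $\E|f(U(x))|<\infty$ because $|U(x)|\le\sqrt d$. By the tower property,
\[
\bigl|\E f(U(x))-\E f(G)\bigr|=\Bigl|\E_a\bigl[\E[f(U(x))\mid a]-\E f(G)\bigr]\Bigr|\le \E_a\Bigl|\E[f(U(x))\mid a]-\E f(G)\Bigr|\le C_W\,\E_a\rho_3(a).
\]
The right-hand side does not depend on $f$, so taking the supremum over $f\in\mathrm{Lip}(1)$ gives $W_1(U(x),G)\le C_W\,\E\rho_3(a)$. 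The quantity $a$ takes at most $2^d$ values, so there are no measurability issues.

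Lemma~\ref{lem:l3} gives $\E\rho_3(a)\le C_3/\sqrt d$, which yields
\[
W_1\bigl(U(x),G\bigr)\le \frac{C_W C_3}{\sqrt d}.
\]

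It remains to extend the bound from the first coordinate to every coordinate. Coordinate $j$ of $\sqrt d\,R_2\widetilde x$ equals $\sum_i H_{ji}(D_2)_{ii}a_i$. Since the entries $H_{ji}\in\{\pm1\}$ are fixed and the diagonal of $D_2$ is i.i.d.\ symmetric, the vector $(H_{ji}(D_2)_{ii})_i$ has the same law as the diagonal of $D_2$ and is independent of $a$. Hence every coordinate has the same distribution as $U(x)$, and the bound holds for each of them. No step is a real obstacle; the only point needing care is the mixture inequality for $W_1$ proved above.
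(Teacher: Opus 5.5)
Your proposal is correct and follows essentially the same route as the paper: condition on $a=R_1\widetilde x$, apply Lemma 2.4 of \cite{chen2021stein} to the conditional Rademacher sum, pass to the mixture through the dual formulation of $W_1$, and finish with Lemma~\ref{lem:l3}. You also spell out two points the paper leaves implicit: the integrability and mixture step, and why every coordinate of $\sqrt d\,R_2\widetilde x$ has the same law as $U(x)$.
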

\begin{proof}
Again, let $a=\NH D_{1}\widetilde x= {R_{1}\widetilde x}\in S^{d-1}$ and 
 $U(x)= \sum_{i=0}^{d-1} a_i\varepsilon_i\,\,,$
where $\varepsilon_i$ are $D_2$'s i.i.d.\ Rademacher signs. Notice that \emph{conditioned on $a$}, the random variables $X_i = a_i\varepsilon_i$ are independent, have zero mean, and satisfy $\sum_{i=0}^{d-1} \mathbb{E}[(a_i \epsilon_i)^2] = \sum_{i=0}^{d-1} a_i^2 = 1$.

Applying Lemma 2.4 of \cite{chen2021stein} yields
$$
W_1(U(x) \mid a, G) \le C_W \sum_{i=0}^{d-1} \mathbb{E}\big[|a_i \epsilon_i|^3 \mid a\big] = C_W \sum_{i=0}^{d-1} |a_i|^3 = C_W \rho_3(a)\,.
$$
Taking expectation and applying Lemma~\ref{lem:l3} yields
\begin{align*}
W_1(U(x), G) &= \sup_{f \in \text{Lip}(1)} \Big| \mathbb{E}\big[\mathbb{E}[f(U(x)) \mid a]\big] - \mathbb{E}[f(G)] \Big| \\
&\le \E_a\big[W_1(U(x) \mid a, G)\big] \le C_W \E[\rho_3(a)] \le \frac{C_{W}C_3}{\sqrt{d}} \,\,.\qedhere
\end{align*}
\end{proof}

\vspace*{-1mm}
\section{Applying the 2-RHT results to DRIVE}\label{sec:drive}
\vspace*{-1mm}


The DRIVE algorithm~\cite{vargaftik2021drive} provides a highly efficient method for biased and unbiased 1-bit (per entry) quantization. Then, in DRIVE, the input vector $x \in \mathbb{R}^d$ is rotated by $\mathcal{R}x$ where $\mathcal{R}$ is a random rotation matrix. Then, the rotated vector is deterministically quantized to a binary vector by taking its sign. To reconstruct the vector, the estimate is $\hat{x} = S \cdot \mathcal{R}^{-1}\text{sign}(\mathcal{R}x)$, for some scalar $S \in \mathbb{R}$.

\ifarxiv
Importantly, the representation of the quantized vector includes just the sign vector. $\mathcal{R}$ is not stored and is generated independently by the dequantizer using shared randomness~\cite{ben2021send}. This is a standard assumption~\cite{chen2024ml,li2024thc,warraich2025optireduce} and can be implemented by using the same PRNG seed. The same applies when using RHT hereafter.
\fi

In the \textbf{biased} configuration, the objective is to strictly minimize the vector's Normalized Mean Squared Error ($\text{vNMSE} = \frac{\mathbb{E}[\|x - \hat{x}\|_2^2]}{\|x\|_2^2}$). DRIVE achieves this by calculating the optimal MSE-minimizing scale factor $S = \frac{1}{d}\|\mathcal{R}x\|_1$. Because the randomized rotation forces the coordinates to behave similarly to Gaussian variables in high dimensions, this biased scaling yields a vNMSE that asymptotically converges to exactly $1 - \frac{2}{\pi} \approx 0.363$ for large $d$.

In the \textbf{unbiased} configuration, the scale factor is either $S = \frac{\|x\|_2^2}{ \|\mathcal{R}x\|_1}$ or $S = \frac{\|x\|_2^2}{ \E\|\mathcal{R}x\|_1}$. Since the latter is more amenable to formal analysis and, as shown in \cite[Appendix A.4]{vargaftik2021drive}, yields a tighter vNMSE bound, we use it. Both these specific scalar corrections explicitly guarantee that the reconstructed vector is unbiased, meaning $\mathbb{E}[\hat{x}] = x$. Imposing this constraint slightly increases the individual vNMSE, which asymptotically converges for both to $\frac{\pi}{2} - 1 \approx 0.571$. 

When quantizing low-dimensional vectors, applying URRs is computationally tractable and provides rigorously bounded theoretical guarantees. However, in practical implementations, high-dimensional data overwhelmingly favor RHT. Yet, as highlighted in the analysis of the DRIVE algorithm, this substitution is not mathematically flawless; while the RHT successfully mimics a uniform rotation for many natural inputs, it can structurally fail to uniformize certain worst-case vectors (such as sparse inputs). 
In particular, DRIVE~\cite{vargaftik2021drive} gives a weaker $\vnmse\le 0.5$ bound for the biased case when using RHT instead of URR, and no bound for the unbiased RHT-based variant.


We now show that, after two RHTs, DRIVE recovers the URR vNMSE bounds for both the biased and unbiased variants, up to an additive term inversely polynomial in $d$.

\subsection{The biased configuration}

 By Lemma 1 of the DRIVE, when using $S = \frac{1}{d}\|\mathcal{R}x\|_1$, for \emph{any} rotation matrix $\mathcal{R}$, the vNMSE is:
\begin{equation} \label{eq:nmse}
\vnmse_{\mathcal{R}} = 1 - \frac{1}{d} \mathbb{E}\Big[\|\mathcal{R} \tilde{x}\|_1^2\Big]\,\,.
\end{equation}

For a URR, as mentioned, the error converges to exactly $1 - \frac{2}{\pi} \approx 0.363$ for large $d$. Our goal is to show that a matching asymptotic upper bound holds for $\mathcal{R} \triangleq R_2$.
By Jensen's inequality:
$$
\frac{1}{d} \mathbb{E}\Big[\|R_2 \tilde{x}\|_1^2\Big] \ge \frac{1}{d} \Big(\mathbb{E}\big[\|R_2 \tilde{x}\|_1\big]\Big)^2\,\,.
$$

By linearity of expectation,
$
\mathbb{E}\big[\|R_2 \tilde{x}\|_1\big] = \sum_{i=0}^{d-1} \mathbb{E}[|(R_2 \tilde{x})_i|] = d \cdot \mathbb{E}[|(R_2 \tilde{x})_1|] \,\,.
$

Substituting this into our squared bound yields
$$
\frac{1}{d} \mathbb{E}\Big[\|R_2 \tilde{x}\|_1^2\Big] \ge \frac{1}{d} \Big(d \, \mathbb{E}[|(R_2 \tilde{x})_1|]\Big)^2 = d \cdot \big(\mathbb{E}[|(R_2 \tilde{x})_1|]\big)^2
\,\,.$$
Substituting this into Equation~\eqref{eq:nmse} yields
\begin{equation} \label{eq:nmse_bound}
\vnmse_{R_2} \le 1 - d \cdot \big(\mathbb{E}[|(R_2 \tilde{x})_1|]\big)^2 \,\,.
\end{equation}

We have now expressed a bound on $\vnmse_{R_2}$ in terms of a single coordinate. We now use Lemma \ref{lemma:u1_lb} below, which uses the fact, similar to the URR, that any single coordinate is close to a Gaussian distribution. As our lower bound can be negative for small $d$, we define $(z)_+ = \max(z, 0)$ to obtain: 
$$
d \cdot \big(\mathbb{E}[|(R_2 \tilde{x})_1|]\big)^2 \ge \bigg( \bigg(\sqrt{\frac{2}{\pi}} - \frac{C_W C_3}{\sqrt{d}}\bigg)_{\!\!+} \bigg)^2 = \frac{2}{\pi} - \mathcal{O}\left(\frac{1}{\sqrt{d}}\right)
$$

Substituting this into Equation~\eqref{eq:nmse_bound} yields the desired upper bound:
$$
\vnmse_{R_2} \le 1 - d \cdot \big(\mathbb{E}[|(R_2 \tilde{x})_1|]\big)^2 \le 1 - \bigg( \bigg(\sqrt{\frac{2}{\pi}} - \frac{C_W C_3}{\sqrt{d}}\bigg)_{\!\!+} \bigg)^2 = 1 - \frac{2}{\pi} + \mathcal{O}\left(\frac{1}{\sqrt{d}}\right)
\,\,.$$

\begin{lemma}\label{lemma:u1_lb}
    Let $x \in \mathbb{R}^d \setminus \{0\}$ be the input vector, $\tilde{x}=\frac{x}{\norm{x}_2}$ and define $U(x) = \sqrt{d} \cdot (R_2 \tilde{x})_1$. Then, 
    $$
    \mathbb{E}\big[|U(x)|\big] \ge \sqrt{\frac{2}{\pi}} - \frac{C_W C_3}{\sqrt{d}}\,\,.
    $$
\end{lemma}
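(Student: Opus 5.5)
The plan is to apply Theorem~\ref{thm:w1-clt} to the test function $f(u)=|u|$. This function is $1$-Lipschitz, so it belongs to $\text{Lip}(1)$. By the Kantorovich--Rubinstein form of $W_1$ adopted in the paper, we get
\[
\bigl|\E|U(x)|-\E|G|\bigr|\le W_1\bigl(U(x),G\bigr)\le \frac{C_W C_3}{\sqrt d},
\]
where $G\sim N(0,1)$.

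It remains to compute the Gaussian first absolute moment:
\[
\E|G| = 2\int_0^\infty \frac{u}{\sqrt{2\pi}}e^{-u^2/2}\,du = \sqrt{\frac{2}{\pi}}.
\]
Dropping the absolute value on the difference in one direction gives $\E|U(x)|\ge \E|G| - \frac{C_W C_3}{\sqrt d}$. This is exactly $\sqrt{2/\pi}-C_WC_3/\sqrt d$.

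The only points needing care are that $U(x)$ is integrable and that the duality formula applies as stated. Integrability is immediate because $|U(x)|\le \sqrt d\,\norm{R_2\widetilde x}_2=\sqrt d$. The duality formula applies since both $U(x)$ and $G$ have finite first moments. No step is a real obstacle; the whole argument is a direct specialization of Theorem~\ref{thm:w1-clt}. We only need the one-sided lower bound, and we do not need positivity, since the $(\cdot)_+$ truncation is handled outside the lemma.
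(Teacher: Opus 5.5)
Your proposal is correct and matches the paper's proof: both use that $f(z)=|z|$ is $1$-Lipschitz, so $\bigl|\E|U(x)|-\E|G|\bigr|\le W_1(U(x),G)$, and then apply Theorem~\ref{thm:w1-clt} together with $\E|G|=\sqrt{2/\pi}$. Your remark on integrability (via $|U(x)|\le\sqrt d$) is a bit of extra care that the paper leaves implicit.
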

\begin{proof}
For any $u,v \in \mathbb{R}$, by the reverse triangle inequality, i.e., $\big||u| - |v|\big| \le |u - v|$ , the absolute value function $f(z) = |z|$ is exactly $1$-Lipschitz. Therefore, $W_1$ bounds the $L_1$ norm difference:
\begin{equation} \label{eq:wasserstein_lip}
\Big| \mathbb{E}[|X|] - \mathbb{E}[|Y|] \Big| \le W_1(X, Y)
\end{equation}

Let $G\sim N(0,1)$. Using Equation \eqref{eq:wasserstein_lip}, rearranging, and applying Theorem \ref{thm:w1-clt} yields,
$$
\mathbb{E}[|U(x)|] \ge \mathbb{E}[|G|] - W_1(U(x), G) \ge \sqrt{\frac{2}{\pi}} -  \frac{C_{W}C_3}{\sqrt{d}} \,\,. \qedhere
$$   
\end{proof}

\subsection{The unbiased configuration}

We now analyze the bias and variance of the unbiased 2-RHT-based DRIVE estimator for an arbitrary input vector $x\in\mathbb{R}^d \setminus \{0\}$. As defined at the beginning of Section~\ref{sec:drive}, to guarantee unbiasedness under a URR, the algorithm scales the quantized vector by $S = \frac{\|x\|_2^2}{\mathbb{E}[\|\mathcal{R}_U x\|_1]}$ where $\mathcal{R}_U$ is a random rotation matrix.  This simplifies to $S = \frac{\|x\|_2}{c_d\sqrt{d}}$, where $c_d \triangleq \frac{1}{\sqrt{d}}\mathbb{E}[\|\mathcal{R}_U \tilde{x}\|_1]$ is the scaled $L_1$ norm expectation for a uniformly distributed unit vector $\mathcal{R}_U \tilde{x} \in \mathcal{S}^{d-1}$. 
Recall the estimate:
\begin{equation*}
\hat{x} = S R_2^{-1}\text{sign}(R_2x) = \frac{\|x\|_2}{c_d\sqrt{d}}R_2^{-1}\text{sign}(R_2\tilde{x}).
\end{equation*}
Let us define the expected direction vector generated by the 2-RHT as $\mu = \frac{1}{\sqrt{d}}\mathbb{E}[R_2^{-1}\text{sign}(R_2\tilde{x})]$. By factoring out the scalars, our expected estimator evaluates to $\mathbb{E}[\hat{x}] = \frac{\|x\|_2}{c_d}\mu$. The estimation bias is the vector difference between this expectation and the input vector $x$:
\begin{equation}
B(x) \triangleq \mathbb{E}[\hat{x}] - x = \frac{\|x\|_2}{c_d}(\mu - c_d\tilde{x}).
\label{eq:bias_def}
\end{equation}

With this definition at hand, we continue with the following theorem.

\begin{theorem} \label{thm:unbiased_drive} 
Let $x \in \mathbb{R}^d \setminus \{0\}$ be an arbitrary vector, and define the 2-RHT DRIVE estimator with the uniform-rotation unbiased scale by $\hat{x} = \frac{\|x\|_2}{c_d\sqrt{d}}R_2^{-1}\text{sign}(R_2x)$. As $d \to \infty$, the estimator's normalized squared bias decays to $0$, and its normalized variance converges to the same value as for a URR:
\noindent\begin{minipage}{0.48\textwidth}
\begin{equation}
\frac{\|B(x)\|_2^2}{\|x\|_2^2} \le \mathcal{O}(d^{-1/2}) \tag{i} \label{eq:i}
\end{equation}
\end{minipage}%
\hfill
\begin{minipage}{0.48\textwidth}
\begin{equation}
\frac{\mathrm{Var}(\hat{x})}{\|x\|_2^2} = \frac{\pi}{2}-1 + \mathcal{O}(d^{-1/2}) \tag{ii} \label{eq:ii}
\end{equation}
\end{minipage}
\end{theorem}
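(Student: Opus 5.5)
The plan is to reduce both (i) and (ii) to two scalar facts about the direction vector $\mu=\frac{1}{\sqrt d}\E[R_2^{-1}\operatorname{sign}(R_2\tilde x)]$: its component along $\tilde x$ is $c_d+\mathcal O(d^{-1/2})$, and its component orthogonal to $\tilde x$ has squared norm $\mathcal O(d^{-1/2})$.

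\textbf{Deterministic identities.} Write $Y=R_2\tilde x$ and $s=\operatorname{sign}(Y)\in\{\pm1\}^d$. Since $R_2$ is orthogonal, $\|R_2^{-1}s\|_2^2=d$ always. Hence
\[
\E\|\hat x\|_2^2=\frac{\|x\|_2^2}{c_d^2}
\quad\text{and}\quad
\Var(\hat x)=\E\|\hat x\|_2^2-\|\E\hat x\|_2^2=\frac{\|x\|_2^2}{c_d^2}\bigl(1-\|\mu\|_2^2\bigr).
\]
Decompose $\mu=\langle\mu,\tilde x\rangle\tilde x+\mu_\perp$. Then
\[
\langle\mu,\tilde x\rangle=\tfrac{1}{\sqrt d}\E\langle s,R_2\tilde x\rangle=\tfrac{1}{\sqrt d}\E\|Y\|_1=\E|U(x)|.
\]
By \eqref{eq:bias_def},
\[
\frac{\|B(x)\|_2^2}{\|x\|_2^2}=\frac{1}{c_d^2}\Bigl[(\E|U(x)|-c_d)^2+\|\mu_\perp\|_2^2\Bigr].
\]
So it suffices to show two things: (a) $|\E|U(x)|-c_d|=\mathcal O(d^{-1/2})$, and (b) $\|\mu_\perp\|_2^2=\mathcal O(d^{-1/2})$.

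Given (a) and (b), (i) is immediate, because $c_d\to\sqrt{2/\pi}$ is bounded away from $0$. For (ii), $\|\mu\|_2^2=c_d^2+\mathcal O(d^{-1/2})$, so $\Var(\hat x)/\|x\|_2^2=1/c_d^2-1+\mathcal O(d^{-1/2})=\pi/2-1+\mathcal O(d^{-1/2})$. This uses the standard fact that $c_d=\sqrt{2/\pi}+\mathcal O(1/d)$ for the uniform sphere; I would cite it from the DRIVE analysis or verify it from the Beta-function formula for $c_d$.

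\textbf{Step (a).} Apply the $1$-Lipschitz argument of Lemma~\ref{lemma:u1_lb} in both directions, using Theorem~\ref{thm:w1-clt}. This gives $|\E|U(x)|-\sqrt{2/\pi}|\le C_WC_3/\sqrt d$, and combining it with $c_d=\sqrt{2/\pi}+\mathcal O(1/d)$ yields (a).

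\textbf{Step (b): the main obstacle.} I would write $\|\mu_\perp\|_2=\sup_e\langle\mu,e\rangle$ over unit vectors $e\perp\tilde x$. Fix such an $e$ and let $a=R_1\tilde x$ and $b=R_1e$; these are orthonormal. Then
\[
\langle\mu,e\rangle=\tfrac{1}{\sqrt d}\sum_i\E[s_i(R_2e)_i]=\E\bigl[\operatorname{sign}(V)W\bigr],
\]
where, conditioned on $a,b$, we have $V=\sum_k\varepsilon_ka_k$ and $W=\sum_k\varepsilon_kb_k$ with the same Rademacher signs $\varepsilon_k$. This uses the fact that all coordinates are identically distributed, as in Theorem~\ref{thm:scalar-clt}.

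Write $\E[\operatorname{sign}(V)W]=\sum_kb_k\E[\varepsilon_k\operatorname{sign}(V)]$ and use a leave-one-out decomposition. Let $V^{(k)}=V-\varepsilon_ka_k$, which is independent of $\varepsilon_k$. Then
\[
\E[\varepsilon_k\operatorname{sign}(V)]=\tfrac12\E\bigl[\operatorname{sign}(V^{(k)}+a_k)-\operatorname{sign}(V^{(k)}-a_k)\bigr]=\Pbb\bigl(|V^{(k)}|<|a_k|\bigr)\operatorname{sign}(a_k),
\]
up to boundary conventions. By the Kolmogorov bound (Theorem~7 of~\cite{tyurin2009new}) applied to $V^{(k)}$, this probability equals $2|a_k|\varphi(0)$ plus an error of order $\rho_3(a)+a_k^2$.

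The main term is then $\sqrt{2/\pi}\sum_ka_kb_k=\sqrt{2/\pi}\langle a,b\rangle=0$. What remains is bounded by $\sum_k|b_k|\bigl(\rho_3(a)+a_k^2\bigr)\le\sqrt d\,\rho_3(a)+\|a\|_\infty$. Its expectation is $\mathcal O(1)$ by Lemma~\ref{lem:l3}, and that is too weak. This is exactly where I expect the difficulty.

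My first attempt would be to keep the error at first order rather than using absolute values. The Kolmogorov remainder for $\Pbb(|V^{(k)}|<|a_k|)$ is itself of size $\min(|a_k|,\rho_3)$, since it is the difference of two CDF errors at nearby points. With that, the remainder becomes $\sum_k|b_k||a_k|\rho_3(a)\le\rho_3(a)$, which has expectation $\mathcal O(d^{-1/2})$. This gives $\langle\mu,e\rangle=\mathcal O(d^{-1/2})$ uniformly in $e$, and hence $\|\mu_\perp\|_2^2=\mathcal O(1/d)$, which is stronger than the $\mathcal O(d^{-1/2})$ needed.

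If that local estimate proves delicate, I would fall back on a cruder route. I would bound the remainder by $\mathcal O(\E\|a\|_\infty+\E\rho_3(a)^{1/2})$, using a smoothed-sign/Stein argument with smoothing width $h$ and optimizing over $h$. That still gives $\|\mu_\perp\|_2^2=\mathcal O(d^{-1/2})$ up to logarithmic factors, which is enough for both (i) and (ii).
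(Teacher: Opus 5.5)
Your reduction matches the paper's: the identities $\E\|\hat x\|_2^2=\|x\|_2^2/c_d^2$ and $\Var(\hat x)=\frac{\|x\|_2^2}{c_d^2}(1-\|\mu\|_2^2)$, the parallel/orthogonal split of $\mu$, and step (a) via Theorem~\ref{thm:w1-clt} with $|c_d-\sqrt{2/\pi}|=\mathcal O(1/d)$ all appear in the paper's proof. The gap is step (b), which is the heart of the theorem, and your plan does not close it.

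\textbf{The first attempt.} The inference from ``the remainder $r_k$ is of size $\min(|a_k|,\rho_3(a))$'' to ``the total remainder is $\sum_k|b_k||a_k|\rho_3(a)$'' is invalid. For $u,v<1$, $\min(u,v)$ is not bounded by $uv$, and $\sum_k|b_k|\min(|a_k|,\rho_3(a))$ is only $\mathcal O(1)$ in general. What you would actually need is $|r_k|\lesssim|a_k|\rho_3(a)$, i.e., that $F_{V^{(k)}}-\Phi$ changes by at most $\rho_3(a)$ times the window length across $[-|a_k|,|a_k|)$. That is a local limit statement. Theorem~7 of~\cite{tyurin2009new} only gives a uniform sup bound, so the difference of two nearby CDF errors is only $\mathcal O(\rho_3(a))$. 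Rademacher sums can also have atoms of mass $\asymp d^{-1/2}$, comparable to $|a_k|$. Even the weaker claim $|r_k|\lesssim|a_k|$ fails when $|a_k|\ll d^{-1/2}$ and $V^{(k)}$ has such an atom inside $[-|a_k|,|a_k|)$. So you are back at $\sum_k|b_k|\rho_3(a)\le\sqrt d\,\rho_3(a)=\Theta(1)$ in expectation, which is exactly the obstacle you identified. The claimed $\|\mu_\perp\|_2^2=\mathcal O(1/d)$ is therefore unsupported.

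\textbf{The fallback.} It is only asserted: the bound $\mathcal O(\E\|a\|_\infty+\E\rho_3(a)^{1/2})$ is not derived. The straightforward execution uses Cauchy--Schwarz for the smoothing error, giving $\sqrt{h+\rho_3(a)}$, plus a Taylor remainder of order $\|a\|_\infty^2/h^2$ or $\|a\|_\infty^2/h^3$. That does not reach the $d^{-1/4}$ rate you need for $\|\mu_\perp\|_2$. Making it work would need extra ingredients, such as sub-Gaussian control of $W$ on $\{|V|<h\}$ and localized remainder bounds, which the proposal does not supply.

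\textbf{The missing idea.} No bivariate or local CLT is needed. Because Theorem~\ref{thm:w1-clt} holds for \emph{every} input direction,
\[
\delta:=\sup_{v\in S^{d-1}}\bigl|\tfrac{1}{\sqrt d}\E\|R_2v\|_1-c\bigr|=\mathcal O(d^{-1/2}),\qquad c=\sqrt{2/\pi}.
\]
Take a unit $u\perp\tilde x$ and $t>0$. Convexity of the $\ell_1$ norm gives $\|R_2(\tilde x+tu)\|_1\ge\|R_2\tilde x\|_1+t\langle\operatorname{sign}(R_2\tilde x),R_2u\rangle$. Take expectations, divide by $\sqrt d$, and use $\|\tilde x+tu\|_2=\sqrt{1+t^2}$. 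This yields
\[
(c+\delta)\sqrt{1+t^2}\ge c-\delta+t\langle\mu,u\rangle .
\]
With $\sqrt{1+t^2}\le 1+t^2/2$ and $t=2\sqrt{\delta/(c+\delta)}$, this gives $\|\mu_\perp\|_2\le 2\sqrt{\delta(c+\delta)}$. Hence $\|\mu_\perp\|_2^2=\mathcal O(d^{-1/2})$, which is exactly what (i) and (ii) require. Your step (a) already supplies everything this argument needs.
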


\textbf{Proof intuition.} To prove the bias decays to zero, we show that $\mu$ converges to the target vector $c_d\tilde{x}$. We achieve this by decomposing $\mu$ into a parallel component (its projection onto $\tilde{x}$) and an orthogonal component ($\mu_\perp$). 

Analyzing these components directly against the algorithm's constant $c_d$ involves cumbersome algebra. Instead, our proof leverages a simpler proxy scale: the expected absolute value of a standard Gaussian, $c \triangleq \sqrt{2/\pi}$. Because 2-RHT coordinates approximate standard Gaussians (Theorem \ref{thm:w1-clt}), the scaled expected $L_1$ norm of any unit vector under 2-RHT concentrates around $c$. We use this to bound both components:

\begin{enumerate}
    \item \textbf{Parallel component:} The projection of $\mu$ onto $\tilde{x}$ equals the expected $L_1$ norm of $R_2\tilde{x}$, which is close to $c$.
    \item \textbf{Orthogonal component:} We show that the perpendicular component $\mu_\perp$ is small, and we then bound its effect on the overall bias using the Pythagorean theorem.
\end{enumerate}

Finally, because the algorithm's scale $c_d$ converges to $c$ with an $\mathcal{O}(d^{-1})$ gap, we substitute $c$ for $c_d$ at the end of the proof to establish the exact variance and bias bounds. The formal proof, along with the required asymptotic expansion of the scaling constant $c_d$, is deferred to Appendix \ref{app:unbiased_proofs}.




\textbf{Implications for Distributed Mean Estimation (DME).} When $N$ arbitrary vectors $x_{(c)}$ are averaged via their independent 2-RHT-based estimates $\hat{x}_{avg}=\frac{1}{N}\sum_{c=0}^{N-1}\hat{x}_{(c)}$ (each having a decaying bias), the Normalized Mean Squared Error ($\mathrm{NMSE}$) decomposes into the aggregated variance and squared bias. Because vectors use independent 2-RHT rotations, the resulting overall variance scales down by a $1/N$ factor. Concurrently, by Jensen's inequality, the squared magnitude of the aggregated bias vector is bounded by the average of the squared individual biases. Thus, the resulting error is close to the one achieved via uniform rotations:
$
\mathrm{NMSE} \triangleq \frac{\mathbb{E}[\|\hat{x}_{avg}-x_{avg}\|_2^2]}{\frac{1}{N}\sum_{c=0}^{N-1}\|x_{(c)}\|_2^2} \le \frac{\frac{\pi}{2}-1}{N} + \mathcal{O}(d^{-1/2}) \,\,.
$

\section{Applying the 2-RHT results to QUIC-FL}\label{sec:quic}


State-of-the-art Distributed Mean Estimation algorithms, such as QUIC-FL \cite{ben2024accelerating}, rely heavily on \emph{Bounded Support Quantization (BSQ)}.\footnote{See also~\cite{dettmers2022gpt3,dettmers2023spqr,lee2024owq}.} In BSQ, the algorithm transmits coordinates that fall outside a predefined range at baseline precision, which often correspond to 32- or 16-bit values to avoid outlier errors, while stochastically quantizing the rest. Following the notation of \cite{ben2024accelerating}, coordinates outside the range  $[-t_p, t_p]$ are sent exactly (i.e., at baseline precision), where the subscript $p$ represents that $t_p$ is chosen so that $G \sim \mathcal{N}(0, 1)$ lies outside this range with probability $p$. This means that after a URR, each coordinate falls outside this range with probability approximately and asymptotically $p$. 

As discussed in \cite{ben2024accelerating}, when QUIC-FL uses a (single) RHT instead of a URR, bounds can still be proven, but they are significantly weaker. In particular, the issue is that the expected fraction of items falling outside $[-t_p, t_p]$ can be substantially more than $p$. An inequality proven in \cite{BentkusDzindzalieta} shows that for worst-case inputs, the tail probability of a coordinate obtained using a single RHT can be up to $3.1787p \approx 3.2p$;  that is, around $3.2$ times larger than that of a standard Gaussian. This multiplicative penalty in the bound on outliers cascades through the QUIC-FL analysis, leading to overestimates of both bandwidth provisioning and expected quantization error.

Our results for composing two RHTs are directly applicable and remove this penalty. Specifically, using our results for the Kolmogorov metric (Theorem \ref{thm:scalar-clt}), we show that QUIC-FL can achieve essentially the same bounds when using two composed RHTs instead of a URR, up to terms asymptotically vanishing in the dimension.  

\subsection{Expected outlier fraction and bandwidth bounds}

Recall that we let $p = \mathbb{P}(|G| > t_p)$ be the expected fraction of coordinates falling outside the range $[-t_p,t_p]$ for a standard Gaussian. With a single RHT, the expected fraction of exactly-sent coordinates (Theorem G.1 in \cite{ben2024accelerating}) can be bounded as follows: let $Z=\sqrt{d}(R_1 \tilde{x})_1$ be the first coordinate after one RHT; then
$
\mathbb{P}\big(|Z| > t_p\big) \le 3.2 \cdot \mathbb{P}\big(|G| > t_p\big) = 3.2 {\cdot}p
$. The bound on the expected number of coordinates requiring full precision is thus much higher than for URRs.

When composing two RHTs, our Kolmogorov bound (Theorem \ref{thm:scalar-clt}) guarantees that the expected fraction of outliers is asymptotically the same as for the Gaussian distribution.

\begin{theorem}[Expected Outlier Fraction with Two RHTs]\label{thm:quic_bandwidth}
As before, let $U(x) = \sqrt{d}(R_2 \tilde{x})_1$. Let $t_p > 0$ be a threshold such that $\mathbb{P}(|G| > t_p) = p$ for $G \sim \mathcal{N}(0, 1)$. Then, the probability that a coordinate falls outside $[-t_p, t_p]$ after two RHTs is bounded by:
\[
\mathbb{P}\big(|U(x)| > t_p\big) \le p + \frac{2.56}{\sqrt{d}} = p + \mathcal{O}\big(d^{-1/2}\big) \,\,.
\]
\end{theorem}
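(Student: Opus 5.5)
The plan is to split the two-sided tail event into two one-sided events and control each with the Kolmogorov bound of Theorem~\ref{thm:scalar-clt}. Write
\[
\Pbb\big(|U(x)|>t_p\big)=\Pbb\big(U(x)>t_p\big)+\Pbb\big(U(x)<-t_p\big),
\]
and let $F_U$ and $\Phi$ denote the CDFs of $U(x)$ and $G$. Set $\delta=\frac{0.5606\,C_3}{\sqrt d}\le\frac{1.28}{\sqrt d}$.

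For the upper tail, note that $\Pbb(U(x)>t_p)=1-F_U(t_p)$. Since $F_U(t_p)\ge \Phi(t_p)-\delta$, we get $\Pbb(U(x)>t_p)\le 1-\Phi(t_p)+\delta=\Pbb(G>t_p)+\delta$.

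For the lower tail, the event is strict, so I will not evaluate the CDF exactly at $-t_p$. Instead I write
\[
\Pbb(U(x)<-t_p)=\lim_{s\uparrow -t_p}F_U(s)\le \lim_{s\uparrow -t_p}\big(\Phi(s)+\delta\big)=\Phi(-t_p)+\delta,
\]
using the continuity of $\Phi$. (One could also just use $\Pbb(U<-t_p)\le F_U(-t_p)$ directly.) This gives $\Pbb(U(x)<-t_p)\le \Pbb(G<-t_p)+\delta$.

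Adding the two bounds, and using $\Pbb(G>t_p)+\Pbb(G<-t_p)=\Pbb(|G|>t_p)=p$ by continuity and symmetry of the Gaussian, yields $\Pbb(|U(x)|>t_p)\le p+2\delta\le p+\frac{2.56}{\sqrt d}$.

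By Theorem~\ref{thm:scalar-clt}, every coordinate of $\sqrt d\,R_2\tilde x$ has the same law. Hence the same bound holds for every coordinate, and by linearity of expectation the expected fraction of exactly-sent coordinates is at most $p+\mathcal{O}(d^{-1/2})$.

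There is no real obstacle here. The only points needing care are the strict inequality at the left endpoint, which is handled by the left limit above (it matters because $U(x)$ may have atoms), and accepting the factor of $2$ from using the Kolmogorov bound once per tail.
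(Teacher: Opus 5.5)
Your proof is correct and follows the paper's argument essentially verbatim: split the two-sided tail into the two one-sided events, bound each by the Kolmogorov distance from Theorem~\ref{thm:scalar-clt}, and add to get $p + 2\cdot 1.28/\sqrt d$. Your left-limit treatment of the strict lower tail is a harmless refinement of the paper's direct use of $\Pbb(U(x)<-t_p)\le F_U(-t_p)$.
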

\begin{proof}
Because the coordinates of $\sqrt{d}(R_2 \tilde{x})$ are identically distributed, the expected fraction of outliers equals the probability that a single coordinate exceeds the threshold $t_p$. We rewrite this tail probability in terms of the cumulative distribution function and apply the Kolmogorov bound from Theorem \ref{thm:scalar-clt}. Let $F_U(t) = \mathbb{P}\big(U(x) \le t\big)$ and $F_G(t) = \mathbb{P}\big(G \le t\big)$ denote the cumulative distribution functions of $U(x)$ and $G$, respectively. Then:
\begin{align*}
\mathbb{P}\big(|U(x)| > t_p\big)
&= \mathbb{P}\big(U(x) > t_p\big) + \mathbb{P}\big(U(x) < -t_p\big) 
\le 1 - F_U(t_p) + F_U(-t_p) \\
&\le 1 - F_G(t_p) + F_G(-t_p) + 2 d_K(U(x), G) 
= \mathbb{P}\big(|G| > t_p\big) + 2 d_K(U(x), G).
\end{align*}
Since $d_K(U(x), G) \le \frac{1.28}{\sqrt{d}}$ by Theorem \ref{thm:scalar-clt}, substituting this bound yields the desired result.
\end{proof}
As $d$ grows, the expected fraction of outliers converges to $p$, saving $\approx 70$\% of the bandwidth overhead if using the bound above for a single RHT.

\subsection{Quantization error and bounded total variation}\label{sec:quic:error}
\ifarxiv
To bound the actual quantization error (vNMSE) of BSQ, we define the expected squared error function conditioned on the coordinate value: $e(z) = \mathbb{E}[(\hat{z} - z)^2 \mid z]$. For $|z| \le t_p$, $e(z)$ is defined by the stochastic quantizer. For $|z| > t_p$, the values are sent exactly, meaning $e(z) = 0$.

This creates a jump discontinuity at the thresholds $\pm t_p$. Because the error drops to zero, the function is non-Lipschitz. Consequently, the 1-Wasserstein metric used in Section \ref{sec:drive} cannot be applied to bound this expected error. Instead, we use the Kolmogorov distance ($d_K$), which can bound the expectation of any function with bounded total variation ($TV$). For a function with jump discontinuities, the total variation is appropriately defined by $TV(f) = \sup_{z_0 < \dots < z_n} \sum_{j=1}^n |f(z_j) - f(z_{j-1})| < \infty$.

\begin{theorem}[Quantization Error with Two RHTs]\label{thm:quic_error}
Let $e:\mathbb{R} \to \mathbb{R}$ be right-continuous, of bounded variation and, with finite limits at $\pm\infty$ (representing the expected squared error function for BSQ). Let $TV(e)$ denote its total variation. For any input $x \in \mathbb{R}^d \setminus \{0\}$, the expected quantization error of a scaled transformed coordinate $U(x) = \sqrt{d}(R_2 \tilde{x})_1$ after two RHTs satisfies:
\[
\Big| \mathbb{E}[e(U(x))] - \mathbb{E}[e(G)] \Big| \le TV(e) d_K(U(x), G) \le \frac{1.28 \cdot TV(e)}{\sqrt{d}} \,\,, 
\]
where $G \sim \mathcal{N}(0, 1)$. 
\end{theorem}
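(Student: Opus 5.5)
The plan is to use the standard Lebesgue--Stieltjes integration-by-parts argument (a Koksma--Hlawka-type inequality) that turns a Kolmogorov distance into a bound on expectations of bounded-variation functions. The second inequality then follows immediately from Theorem~\ref{thm:scalar-clt}, so all the work is in the first.

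Let $F_U$ and $F_G$ be the CDFs of $U(x)$ and $G$, and set $\Delta = F_U - F_G$. This function is right-continuous, satisfies $|\Delta|\le d_K(U(x),G)$, and tends to $0$ at $\pm\infty$. Because $e$ is right-continuous and of bounded variation, it defines a finite signed Stieltjes measure $de$ with $|de|(\R) = TV(e)$.

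Next, we use the finite limit $e(-\infty)$ and write, for every $z$,
\[
e(z) = e(-\infty) + \int_{(-\infty,z]} de(s).
\]
Since $e$ is bounded, both expectations are finite. Taking expectations and applying Fubini (justified because $|de|$ is finite and the integrand is bounded) gives
\[
\E[e(U(x))] = e(-\infty) + \int_{\R} \Pbb\bigl(U(x)\ge s\bigr)\, de(s),
\]
and the analogous identity holds for $G$. Subtracting the two,
\[
\E[e(U(x))]-\E[e(G)] = \int_{\R}\bigl(\Pbb(G< s)-\Pbb(U(x)< s)\bigr)\, de(s) = -\int_{\R} \Delta(s^-)\, de(s).
\]
Here $\Delta(s^-)$ denotes the left limit, and $|\Delta(s^-)|\le d_K(U(x),G)$ still holds because a supremum bound on $\Delta$ passes to its left limits. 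Bounding the integrand by its supremum and the measure by its total variation then yields
\[
\bigl|\E[e(U(x))]-\E[e(G)]\bigr|\le d_K(U(x),G)\, |de|(\R)= TV(e)\, d_K(U(x),G).
\]

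The main obstacle is the measure-theoretic bookkeeping at the jumps, in particular the $\pm t_p$ discontinuities that motivate this theorem. The Gaussian side is continuous, but $U(x)$ is discrete given $a$, so its law has atoms. One therefore has to track $\Pbb(U\ge s)$ versus $\Pbb(U>s)$ carefully, and this is why the statement assumes right-continuity. I expect the left-limit formulation above to handle this cleanly. Alternatively, one can approximate $e$ by step functions $\sum_j c_j \mathbf 1_{[z_j,\infty)}$ with $\sum_j|c_j|\le TV(e)$. Each indicator changes its expectation by at most $d_K$, so the bound holds for step functions, and it passes to the limit by bounded convergence.

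Finally, substituting $d_K(U(x),G)\le 1.28/\sqrt d$ from Theorem~\ref{thm:scalar-clt} gives the second inequality.
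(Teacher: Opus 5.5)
Your proposal is correct and is essentially the paper's argument. Both reduce the difference to $-\int_{\R}\bigl(F_U(s^-)-F_G(s^-)\bigr)\,de(s)$, bound it by $d_K(U(x),G)\cdot TV(e)$, and then invoke Theorem~\ref{thm:scalar-clt}. The only difference is how that identity is obtained. The paper cites Lebesgue--Stieltjes integration by parts and kills the boundary terms using the BSQ-specific fact $e(z)=0$ for $|z|>t_p$. You derive the identity directly via Fubini from $e(z)=e(-\infty)+de\bigl((-\infty,z]\bigr)$, which uses only the theorem's stated hypotheses.
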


\textbf{Proof intuition.} According to Theorem \ref{thm:scalar-clt}, a scaled transformed coordinate $U(x)=\sqrt d (R_2\tilde x)_1$ is close to Gaussian in Kolmogorov distance. Since Kolmogorov distance controls how much probability the two distributions assign below any threshold, and since $TV(e)$ measures the total amount by which the quantization-error function can change, the expected value of $e$ can change by at most their product. The complete proof is deferred to Appendix \ref{app:quic_proofs}.

\textbf{Refined worst-case error bounds.} In \cite[Theorem G.3]{ben2024accelerating}, computing worst-case bounds for the quantization error $E_b$ required partitioning the support into coarse intervals (e.g., $[0, 1.5]$) and penalizing the entire probability mass of that interval with the maximal error occurring at its boundary. This Riemann sum overestimation was necessary due to the 1-RHT multiplicative tail bound, leading to inflated theoretical bounds (e.g., producing 1-bit quantization error $E_1 \le 4.831$ as $d \to \infty$).

By substituting 1-RHT with 2-RHT, the theoretical vNMSE bounds converge to the continuous values (identified numerically in \cite[Section 3.5]{ben2024accelerating}).
Transitioning from 1-RHT to 2-RHT thus recovers the desired theoretical error limits up to a decaying $\mathcal{O}(d^{-1/2})$ term.

\begin{table}[t]
\centering
\begin{tabular}{|c|c|c|c|}
\hline
\textbf{Bit Budget} & \textbf{1-RHT Bound} (Thm G.3) & \textbf{2-RHT Bound} & \textbf{Improvement} \\ \hline
$b=1$ & $E_1 \le 4.831$ & $E_1 \approx 1.520$ & $\mathbf{3.17\times}$ \textbf{tighter} \\ \hline
$b=2$ & $E_2 \le 0.692$ & $E_2  \approx 0.223$ & $\mathbf{3.10\times}$ \textbf{tighter} \\ \hline
$b=3$ & $E_3 \le 0.131$ & $E_3 \approx 0.044$ & $\mathbf{2.97\times}$ \textbf{tighter} \\ \hline
$b=4$ & $E_4 \le 0.0272$ & $E_4 \approx 0.0098$ & $\mathbf{2.77\times}$ \textbf{tighter} \\ \hline
\end{tabular}
\vspace{1mm}
\caption{Comparing the $b$-bit vNMSE, i.e., $E_b$, as a function of the bit-budget $b$ for 1-RHT and 2-RHT, adapting results from \cite{ben2024accelerating}.}
\end{table}
\else
To bound the vNMSE, we define the expected squared error function conditioned on the coordinate value: $e(z) = \mathbb{E}[(\hat{z} - z)^2 \mid z]$. For $|z| \le t_p$, $e(z)$ is defined by the stochastic quantizer. For $|z| > t_p$, the values are sent exactly, meaning $e(z) = 0$.

This creates a jump discontinuity at the thresholds $\pm t_p$. Because the error drops to zero, the function is non-Lipschitz. Consequently, the 1-Wasserstein metric used in Section \ref{sec:drive} cannot be applied to bound this expected error. Instead, in Appendix~\ref{app:quic_proofs}, we show how to use the Kolmogorov distance ($d_K$) to tighten the attainable bounds, getting $2.77-3.17\times$ vNMSE reductions for $b\in\set{1,2,3,4}$.
\fi


\section{Beyond scalar quantization: Three RHTs for VQ}\label{sec:3rht}

While two-RHTs are sufficient to ensure the one-dimensional marginal distributions required for \emph{scalar} rotation-based quantization algorithms like DRIVE and QUIC-FL, some compression schemes require multi-dimensional VQ in which coordinates are partitioned into sets of size $k$ and treated as $ k$-dimensional vectors. This is particularly prominent in frameworks built for inference, KV cache quantization, and vector databases, e.g.,~\cite{gao2024rabitq,malinovskii2025higgs,quartet2026}. 

Vector quantizers group multi-dimensional data into localized regions based on the nearest representative point. For standard codebooks to be efficient, the coordinates within a $k$-sized block must not only possess Gaussian marginals but must also be weakly correlated. If coordinates are highly correlated, the data clusters in a lower-dimensional subspace, leading to significant quantization error.

In this section, we show that two RHTs are insufficient for VQ because they do not decorrelate sparse inputs. However, we establish that adding a third RHT yields a weak coordinate correlation sufficient for using Gaussian codebooks while maintaining their theoretical performance. Here, we consider Hadamard matrices constructed via the standard Sylvester's construction. 

\subsection{The correlation bottleneck of Two RHTs}

To understand why two RHTs may be insufficient for VQ, consider the conditional covariance between two coordinates of the transformed vector.

\begin{lemma}[2-RHT Correlation Bottleneck]\label{lem:2rht_corr}
There exist input vectors $\tilde{x} \in S^{d-1}$ and distinct coordinates $i\neq j$ such that, for
$U=\sqrt d\,R_2\tilde{x}$ and $a=\NH D_1\tilde{x}$, for every realization of the \mbox{sign matrix $D_1$,}
\[
\Cov_{D_2}(U_i,U_j\mid a)=\pm 1\ .
\]
Thus, the two coordinates are perfectly conditionally correlated or anti-correlated.
\end{lemma}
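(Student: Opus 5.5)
The plan is to compute the conditional covariance in closed form for a fixed intermediate vector $a$, and then pick an input that makes $a$ as degenerate as possible for every $D_1$. Conditioned on $a$, we have $U=\sqrt d\,\NH D_2 a = H D_2 a$. Hence $U_i=\sum_k H_{ik}\varepsilon_k a_k$, where $\varepsilon_k$ are the i.i.d. Rademacher signs of $D_2$. Since $\E\varepsilon_k\varepsilon_l=\delta_{kl}$ and each $U_i$ has conditional mean zero,
\[
\Cov_{D_2}(U_i,U_j\mid a)=\sum_{k=0}^{d-1} H_{ik}H_{jk}\,a_k^2 .
\]
This is a weighted sum of the entrywise products of rows $i$ and $j$, with weights $a_k^2$ that sum to $1$. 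Each $U_i$ also has conditional variance $\sum_k a_k^2=1$, so this covariance is exactly the conditional correlation. It equals $\pm1$ exactly when $H_{ik}H_{jk}$ is constant on the support of $a$.

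Next we choose the input. Take $\tilde x=e_1$, the first standard basis vector. Then $a=\NH D_1 e_1=\pm\frac{1}{\sqrt d}H_{\cdot,1}$, so $a_k^2=1/d$ for every $k$ and every realization of $D_1$. With the full support, the covariance becomes $\frac1d\sum_k H_{ik}H_{jk}=\frac1d (HH^\top)_{ij}=0$ for $i\ne j$. So the flat case gives no correlation, and this input does not work.

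The input must instead make $a$ sparse for every $D_1$. Use Sylvester's structure: take $\tilde x=\frac{1}{\sqrt2}(e_0+e_1)$, the example from the introduction. Then $a=\frac{1}{\sqrt{2d}}\bigl(s_0H_{\cdot,0}+s_1H_{\cdot,1}\bigr)$ with $s_0,s_1\in\{\pm1\}$ the signs of $D_1$. In Sylvester's construction, columns $0$ and $1$ agree on the rows with even index and disagree on the rows with odd index. So $a$ is supported on exactly $d/2$ coordinates, namely the even rows if $s_0=s_1$ and the odd rows otherwise, and has $a_k^2=2/d$ there. Now pick $i,j$ with $H_{ik}H_{jk}$ constant on both the even and the odd index sets. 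For example, take rows $i=0$ and $j=d/2$. Row $d/2$ is $(1,\dots,1,-1,\dots,-1)$, which depends only on the top bit of $k$, so it is not constant on the even indices. A better choice is to pick $j$ so that the Walsh function $H_{jk}H_{ik}=H_{i\oplus j,k}$ depends only on the lowest bit of $k$, i.e.\ $i\oplus j=d/2$ in Sylvester's bit ordering $H_{ik}=(-1)^{\langle i,k\rangle}$. Concretely, $H_{ik}=(-1)^{\langle i,k\rangle}$, and columns $0$ and $1$ differ by the lowest bit of $k$'s partner. The selector $r=i\oplus j$ must make $(-1)^{\langle r,k\rangle}$ constant on $\{k:\langle 1,k\rangle\ \text{even}\}$ and also on its complement, which holds iff $r\in\{0,1\}$; so we take $r=1$, i.e.\ $j=i\oplus1$. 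Then $H_{ik}H_{jk}=(-1)^{k_0}$, which is $+1$ on one half of the support structure and $-1$ on the other. The covariance is therefore $+1$ when $s_0=s_1$ and $-1$ otherwise, so it is $\pm1$ for every $D_1$.

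The only real obstacle is this index bookkeeping. One has to make sure the support pattern of $a$, taken over all four sign choices of $(s_0,s_1)$, is exactly a coset of the subgroup on which the chosen Walsh function $H_{i\oplus j,\cdot}$ is constant. This follows once everything is written in the bit-inner-product form of Sylvester's matrix, since the supports are the two cosets of the hyperplane $\{k: k_0=0\}$ and $(-1)^{k_0}$ is constant on each coset.
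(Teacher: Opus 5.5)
Your proof is correct and reaches the same conclusion as the paper: for $\tilde x=(e_0+e_1)/\sqrt2$ and $i\oplus j=1$, the conditional covariance equals $s_0s_1$. The difference is in how you evaluate the covariance.

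Both arguments start from $\Cov_{D_2}(U_i,U_j\mid a)=\sum_r H_{i\oplus j,r}\,a_r^2$. The paper then expands $a_r^2$ as a quadratic form in $\tilde x$ and uses orthogonality of Hadamard rows. This gives an identity valid for every input:
\[
\Cov_{D_2}(U_i,U_j\mid a)=2\sum_{l<m,\ l\oplus m=i\oplus j}s_ls_m\tilde x_l\tilde x_m .
\]
For the two-sparse input only the pair $(0,1)$ survives.

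You instead compute $a$ explicitly. You show it has weight $2/d$ on the even coset of $\{k:k_0=0\}$ when $s_0=s_1$, and on the odd coset otherwise. You then choose $i\oplus j$ so that the character $(-1)^{\langle i\oplus j,k\rangle}$ is constant on both cosets.

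Your route is more hands-on, and it characterizes exactly which pairs work for this input (only $i\oplus j=1$). However, it is tailored to this example. The paper's general identity is what gets reused, with $y=R_1\tilde x$ in place of $\tilde x$, in the proof of Theorem~\ref{thm:3rht_decorr} to obtain the zero conditional mean and the bound $2\|y\|_\infty^2$.

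In a final write-up, drop the dead ends ($\tilde x=e_1$ and $j=d/2$). Also fix the slip ``i.e.\ $i\oplus j=d/2$''. Under your own convention $H_{ik}=(-1)^{\langle i,k\rangle}$, the character depending only on the lowest bit of $k$ is row $1$, as your very next sentence correctly derives.
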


\textbf{Proof intuition.} When evaluating the covariance between two output coordinates $U_i$ and $U_j$ (conditioned on the intermediate vector after the first RHT), the properties of the Hadamard matrix collapse the sum into pairwise products of the original input coordinates. If the input is highly sparse (e.g., only two non-zero coordinates), this sum reduces to a single surviving term. Because the marginal variance of each coordinate is 1, a conditional covariance of $\pm 1$ dictates that the coordinates are perfectly conditionally correlated or anti-correlated. The derivation is deferred to Appendix \ref{sec:lem:2rht_corr}.

We note that the distinction between unconditional and conditional covariance is essential. Averaging over both RHTs may make two coordinates appear uncorrelated, but a vector quantizer operates on a block generated by one \emph{realized} transform. As the lemma above shows, for sparse inputs under two RHTs, the conditional law of a coordinate pair can be supported on a one-dimensional subspace, even though the unconditional covariance vanishes after averaging over the first RHT. Thus, unconditional decorrelation is insufficient for using Gaussian VQ codebooks. 

Accordingly, we next show that a 3-RHT sequence bounds the conditional covariance itself in RMS, ruling out such bad cases for fixed coordinate pairs.

\subsection{The 3-RHT conditional decorrelation guarantee}

By using three RHTs, we can eliminate the conditional correlation patterns observed for two RHTs.

\begin{theorem}[3-RHT Decorrelation]\label{thm:3rht_decorr}
Let $\tilde{x} \in S^{d-1}$, $y = R_1 \tilde{x}$, $U = \sqrt{d} R_3 \tilde{x}$, and for $i \neq j$ define the conditional covariance
$C_{i,j}(y, D_2) = \Cov_{D_3}(U_i, U_j \mid y, D_2)$.
Then, for every given $y=R_1\tilde x$,:
$
\E_{D_2}\big[C_{i,j}(y, D_2)\big] = 0 \,\,,
$
and the Root Mean Square (RMS) of this $D_3$-conditional covariance over $D_1, D_2$ decays:
$
\left(\E_{D_1,D_2}\big[C_{i,j}(R_1 \tilde{x}, D_2)^2\big]\right)^{1/2} \le \left(2\,\E_{D_1}\big[\|R_1 \tilde{x}\|_\infty^2\big]\right)^{1/2} \le \mathcal{O}\left(\sqrt{\frac{\log d}{d}}\right) \,\,.
$
\end{theorem}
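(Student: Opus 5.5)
Conditioned on $y$ and $D_2$, write $b=\NH D_2 y$, so that $U=HD_3 b$ and $U_i=\sum_k H_{ik}\varepsilon_k b_k$, where $\varepsilon_k$ are the signs of $D_3$. Since these signs are independent, zero-mean and unit-variance, the conditional covariance is
\[
C_{i,j}(y,D_2)=\sum_{k} H_{ik}H_{jk}\, b_k^2 .
\]
With Sylvester's construction, $H_{ik}H_{jk}=H_{i\oplus j,k}$. Setting $s=i\oplus j\neq 0$ and $h_k=H_{sk}\in\{\pm1\}$, we have $\sum_k h_k=0$, because row $s$ of $H$ is orthogonal to row $0$, the all-ones row. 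So $C=\sum_k h_k b_k^2$.

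\textbf{Step 1 (mean zero).} Expand $b_k=\frac{1}{\sqrt d}\sum_l H_{kl}\delta_l y_l$, where $\delta_l$ are the signs of $D_2$. Then
\[
\E_{D_2}[b_k^2]=\frac1d\sum_l y_l^2=\frac1d .
\]
Hence $\E_{D_2}C=\frac1d\sum_k h_k=0$, and this holds for every $y$.

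\textbf{Step 2 (second moment).} Write
\[
b_k^2=\frac1d\sum_{l,m}H_{kl}H_{km}\delta_l\delta_m y_ly_m .
\]
Summing against $h_k$, the diagonal terms $l=m$ vanish because $\sum_k h_k=0$. Using $\sum_k H_{sk}H_{kl}H_{km}=\sum_k H_{k,s\oplus l\oplus m}=d\,\mathbf 1[m=l\oplus s]$, the remaining terms collapse to
\[
C=\sum_{l}\delta_l\delta_{l\oplus s}\,y_l y_{l\oplus s}.
\]
This is the same ``single surviving pairing'' structure as in Lemma~\ref{lem:2rht_corr}, now applied to $y$ rather than $\tilde x$. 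The products $\delta_l\delta_{l\oplus s}$ depend only on the unordered pair $\{l,l\oplus s\}$, and distinct pairs use disjoint sign variables. So the $d/2$ pair-terms are uncorrelated with mean zero, and $C=2\sum_{\text{pairs}}\delta_l\delta_{l\oplus s}y_ly_{l\oplus s}$. This gives
\[
\E_{D_2}C^2=4\sum_{\text{pairs}}y_l^2y_{l\oplus s}^2=2\sum_l y_l^2y_{l\oplus s}^2\le 2\|y\|_\infty^2\sum_l y_l^2=2\|y\|_\infty^2 .
\]
Taking $\E_{D_1}$ and then a square root yields the first inequality. The index bookkeeping in this collapse is where I expect the most care to be needed, specifically the factor $2$ coming from counting each unordered pair twice.

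\textbf{Step 3 ($\ell_\infty$ after one RHT).} Each $y_k=\frac{1}{\sqrt d}\sum_l \pm\tilde x_l\varepsilon'_l$ is $\frac1d$-subgaussian by Hoeffding's inequality, since $\sum_l\tilde x_l^2=1$. A union bound over the $d$ coordinates, or the standard maximal inequality obtained by integrating tails, $\E\max_k y_k^2\le \frac{2\log(2d)}{d}+O(1/d)$, gives $\E\|y\|_\infty^2=\mathcal O(\log d/d)$. Taking square roots gives the claimed $\mathcal O(\sqrt{\log d/d})$ rate. I expect this step to be routine and the main obstacle to be the algebraic collapse in Step 2.
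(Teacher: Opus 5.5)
Your proof is correct and follows essentially the same route as the paper: you reduce the $D_3$-conditional covariance to the pair sum $\sum_l \delta_l\delta_{l\oplus s}\,y_l y_{l\oplus s}$ (the paper imports this from the proof of Lemma~\ref{lem:2rht_corr} with $y$ in place of $\tilde x$), use the disjointness of the XOR pairs to get $\E_{D_2}C^2 = 2\sum_l y_l^2 y_{l\oplus s}^2 \le 2\|y\|_\infty^2$, and finish with a sub-Gaussian union bound on $\E\|R_1\tilde x\|_\infty^2$. The differences are cosmetic: you check the zero mean via $\E_{D_2}[b_k^2]=1/d$ and $\sum_k h_k=0$ rather than from the collapsed pair formula, and you write out the maximal inequality explicitly instead of citing it.
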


\textbf{Proof intuition.} While the 2-RHT may fail, e.g., on sparse inputs, the first of a 3-RHT sequence makes the input to the final two RHTs flat on average, bounding its expected squared $\ell_\infty$ norm by $\mathcal{O}(\frac{\log d}{d})$. For any fixed output $y$ of the first RHT, we show that the final two RHTs decorrelate coordinates at a scale controlled by how flat $y$ is. Since the first RHT makes $y$ flat on average, the RMS conditional covariance decays as claimed. The complete proof is given in Appendix \ref{sec:lem:3rht_decorr}.

\subsection{Codebook universality: Matching the URR limit}

In $k$-dimensional VQ, a codebook $\mathcal{C} = \{c_1, \dots, c_M\} \subset \mathbb{R}^k$ maps each $k$-dimensional block of the vector to its nearest centroid. The expected quantization error $\mathcal{E}(\mathcal{C}, V)$ is the expected squared distance between the block $V$ and its selected centroid from the codebook $\mathcal{C}$.

For a URR, any $k$-dimensional block converges to a standard multivariate Gaussian $Z \sim \mathcal{N}(0, I_k)$ as $d \to \infty$. Thus, the optimal codebook is derived via $k$-means clustering over this distribution. 
\ifarxiv
We show that a 3-RHT achieves the same quantization error up to a decaying term.
\else
We show, with a proof provided in~Appendix~\ref{sec:lem:3rht_vq_error}, that a 3-RHT achieves the same quantization error up to a decaying term.
\fi

\begin{theorem}[3-RHT Codebook Universality]\label{thm:3rht_vq_error}
Let $\tilde{x}\in S^{d-1}$ and $U = \sqrt{d} R_3 \tilde{x}$. Fix the block size $k$ and a finite codebook $\mathcal{C} \subset \mathbb{R}^k$ with radius $B = \max_{c \in \mathcal{C}} \|c\|_2$. 

Then, the difference in expected quantization error satisfies:
\[
\Big| \E\Big[\min_{c \in \mathcal{C}} \|U_{0:k-1} - c\|_2^2\Big] - \E\Big[\min_{c \in \mathcal{C}} \|Z - c\|_2^2\Big] \Big| 
\le 
\mathcal{O}_{k, B}\bigg( \sqrt{\frac{\log d}{d}} \bigg) \,\,.
\]
\end{theorem}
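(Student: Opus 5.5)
We plan to condition on the first two sign matrices. Set $w = R_2\tilde x\in S^{d-1}$. Given $w$, the block is $V=U_{0:k-1}=H_{0:k-1,\cdot}D_3 w$, so $V=\sum_{l=0}^{d-1}\varepsilon_l w_l h_l$ with $h_l=(H_{0,l},\dots,H_{k-1,l})\in\{\pm1\}^k$ and $\varepsilon_l$ the i.i.d.\ signs of $D_3$. This is a sum of independent, mean-zero random vectors in $\R^k$. Its conditional covariance is $\Sigma(w)$, with $\Sigma_{ii}=1$ and off-diagonal entries $\Sigma_{ij}=C_{i,j}$ exactly as in Theorem~\ref{thm:3rht_decorr}.

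The test function is $f(v)=\min_{c\in\mathcal C}\|v-c\|_2^2$. It is not globally Lipschitz, but it is locally Lipschitz: $f(v)=\|v\|^2-\max_c(2\langle v,c\rangle-\|c\|^2)$. The term $\|v\|^2$ has the same expectation $k$ under $V$ and under $Z$, since $\E\|V\|^2=\sum_i\Sigma_{ii}=k$. This reduces the problem to $g(v)=\max_c(2\langle v,c\rangle-\|c\|^2)$, which is $2B$-Lipschitz. So it suffices to bound
\[
\bigl|\E g(V)-\E g(Z)\bigr|\le 2B\,\E_{D_1,D_2}\bigl[W_1(V\mid w,\;Z)\bigr].
\]
We split this by the triangle inequality through $Z_\Sigma\sim N(0,\Sigma(w))$, giving $W_1(V\mid w,Z)\le W_1(V\mid w,Z_\Sigma)+W_1(Z_\Sigma,Z)$.

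\textbf{Gaussian covariance term.} Couple the two Gaussians as $Z_\Sigma=\Sigma^{1/2}Z$. Then
\[
W_1\le \E\|(\Sigma^{1/2}-I)Z\|\le \sqrt k\,\|\Sigma^{1/2}-I\|_F\le \sqrt k\,\|\Sigma-I\|_F,
\]
using $|\sqrt\lambda-1|\le|\lambda-1|$ for eigenvalues $\lambda\ge 0$. Now $\|\Sigma-I\|_F^2=\sum_{i\ne j}C_{i,j}^2$. By Jensen and Theorem~\ref{thm:3rht_decorr},
\[
\E\|\Sigma-I\|_F\le k\max_{i\ne j}\bigl(\E C_{i,j}^2\bigr)^{1/2}=\mathcal O_k\bigl(\sqrt{\log d/d}\bigr).
\]

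\textbf{Multivariate CLT term.} We need a multivariate $W_1$ Berry--Esseen bound for independent summands with a possibly degenerate covariance $\Sigma$. A Stein-method bound for $W_1$ of the form $\mathcal O_k\bigl(\sum_l\E\|\varepsilon_l w_l h_l\|^3\bigr)=\mathcal O_k(k^{3/2}\rho_3(w))$ avoids needing $\Sigma^{-1}$. Alternatively, we can apply the bound after the whitening-free smoothing argument: add an independent $N(0,\delta I)$, pay $\mathcal O(\sqrt{k\delta})$, and optimize $\delta$. Then $\E\rho_3(w)\le C_3/\sqrt d$ by Lemma~\ref{lem:l3}, applied with $w=\NH D_2(R_1\tilde x)$. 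This yields an $\mathcal O_k(d^{-1/2})$ contribution, possibly worsened to $d^{-1/2}$ up to polylog factors if smoothing is used. Either way it is dominated by $\sqrt{\log d/d}$.

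\textbf{Main obstacle.} We expect the main obstacle to be this multivariate CLT step: obtaining a clean $W_1$ bound that is uniform over near-singular $\Sigma(w)$. First we would cite a Stein-method result that needs only identity-normalized covariance up to a perturbation, for instance Gallouët--Mijoule--Swan or Bonis, which gives $W_1\lesssim\sqrt k\sum\E\|X_l\|^3$ when $\sum\Cov X_l=I$. We would apply it on the event $\|\Sigma-I\|_{op}\le 1/2$, where whitening by $\Sigma^{-1/2}$ costs only constants. On the complementary event, we use the crude bound $\E|g(V)|+\E|g(Z)|=\mathcal O_{k,B}(1)$ and control its probability by Markov's inequality on $\E\|\Sigma-I\|_F^2=\mathcal O_k(\log d/d)$. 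Combining the pieces gives the claimed $\mathcal O_{k,B}(\sqrt{\log d/d})$.
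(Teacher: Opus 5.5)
Your final plan is correct and is essentially the paper's proof. You condition on $R_2\tilde x$, cancel the quadratic term via $\E\|U_{0:k-1}\|_2^2=k$, and pass the $2B$-Lipschitz remainder to $W_1$. You then compare with $N(0,\Sigma)$ using $\|\Sigma^{1/2}-I_k\|_F\le\|\Sigma-I_k\|_F$ and Theorem~\ref{thm:3rht_decorr}, and whiten on the small-mismatch event before applying an identity-covariance Stein bound (the paper uses Rai\v{c}'s) together with Lemma~\ref{lem:l3}. The only difference is that Lemma~\ref{lem:mv_stein_bound} handles the bad event with the crude bound $W_1\le(4\sqrt k+1)\|\Sigma-I_k\|_F$ instead of your Markov argument, and the two are equivalent.

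You should, however, drop the two alternatives in your CLT paragraph, because neither works as claimed. No $W_1$ bound of the form $\mathcal O_k(\sum_l\E\|X_l\|_2^3)$ can hold uniformly over near-singular $\Sigma$: for $k=1$, $X=\sigma\varepsilon$ gives $W_1(\sigma\varepsilon,\sigma G)\asymp\sigma$ while $\E|X|^3=\sigma^3$. The same example shows that any whitening-free smoothing argument is limited to order $(\sum_l\E\|X_l\|_2^3)^{1/3}$, not $d^{-1/2}$ up to polylogarithmic factors.
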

\ifarxiv
\textbf{Proof intuition.} We utilize the multivariate generalization of Stein's Method to bound the 1-Wasserstein distance between the 3-RHT block and a standard multivariate Gaussian. This distance is governed by two components: the deviation of the block's covariance matrix from the identity \(I_k\), and its expected third absolute moments. Our decorrelation guarantee (Theorem \ref{thm:3rht_decorr}) bounds the covariance mismatch, while the 1-RHT smoothing property (Lemma \ref{lem:l3}) bounds the third moments.
The only additional issue is that the VQ error function for vector $v$,  $\min_{c\in\mathcal C}\|v-c\|_2^2$
is quadratic and therefore not globally Lipschitz. Thus, we separate the quadratic part from the codebook-dependent part via
$
\min_{c\in\mathcal C}\|v-c\|_2^2
=
\|v\|_2^2
+
\min_{c\in\mathcal C}
\bigl(\|c\|_2^2-2\langle v,c\rangle\bigr).
$
The second term is globally \(2B\)-Lipschitz, and the first term cancels in expectation because both the 3-RHT block and \(Z\sim\mathcal N(0,I_k)\) have expected squared norm \(k\). The formal proof is deferred to Appendix \ref{sec:lem:3rht_vq_error}.
\fi

\section{Adaptive linear-time verification}
\label{sec:adaptive}
\ifarxiv
Because practical data distributions are often not adversarial, we can dynamically adapt the required number of RHTs. Our proofs show that the number of required RHT layers is governed by two quantities of the normalized input 
$
    \tilde{x}=\frac{x}{\|x\|_2}.
$
For the scalar marginal normal approximation, the relevant quantity is the cubed
$\ell_3$ mass
$
    \rho_3(\tilde{x})
    =
    \sum_{r=0}^{d-1}|\tilde{x}_r|^3.
$
For the RMS conditional correlation estimate used in the vector-quantization
analysis, the relevant quantity is the squared norm $\ell_\infty$
$
    \|\tilde{x}\|_\infty^2.
$
Thus, before applying the worst-case number of RHTs, a system can first
check whether the input is already sufficiently flat. 

\paragraph{Scalar check.}
If
$
    \rho_3(\tilde{x})\le \eta_3,
$
then the one-coordinate Berry--Esseen step used in
Theorems~\ref{thm:scalar-clt} and~\ref{thm:w1-clt} already applies to
a single RHT. Namely, for every fixed coordinate
$
    U_i=\sqrt d\,(\NH D\tilde{x})_i,
$
we have
$
    d_K(U_i,G)\le 0.5606\,\eta_3,
    \,\,
    W_1(U_i,G)\le C_W\eta_3,
$
where $G\sim\mathcal N(0,1)$. Therefore, if
$
    \eta_3=\mathcal O(d^{-1/2}),
$
then 1-RHT already gives the same asymptotic guarantees as the
worst-case 2-RHT theorem. 

\paragraph{Vector check.}

For VQ, scalar marginal normality is insufficient; the relevant quantity is the conditional covariance between coordinates in a fixed block. While Lemma~\ref{lem:2rht_corr} identifies a worst-case correlation bottleneck for 2-RHT, the analysis in Theorem~\ref{thm:3rht_decorr} implies that if the input is sufficiently flat, $\|\tilde{x}\|_\infty^2 \le \eta_\infty$, then the decorrelation guarantees of the 3-RHT construction are already met by only two RHT layers. More precisely, let $U = \sqrt d\,\text{NH} D_b \text{NH} D_a \tilde{x}$ be the 2-stage RHT of $\tilde{x}$. For distinct coordinates $i \neq j$, define the conditional covariance as,
\[
    C_{ij}(\tilde{x},D_a) = \text{Cov}_{D_b}\!\left(U_i,U_j \mid \tilde{x}, D_a \right).
\]
The covariance calculation then yields,
\[
    \left( \mathbb{E}_{D_a}\!\left[C_{ij}(\tilde{x},D_a)^2\right] \right)^{1/2} \le 2\sqrt{\eta_\infty}.
\]
Thus, if $\eta_\infty = \mathcal{O}\left(\frac{\log d}{d}\right)$, the preliminary smoothing RHT is unnecessary, and 2-RHT suffices to reach the same expected-error limit guaranteed by the worst-case 3-RHT construction in Theorem~\ref{thm:3rht_decorr}.

\paragraph{Implementation.}
Both checks can be evaluated in one pass over $x$. Define
$S_2=\sum_{r=0}^{d-1}x_r^2$, $S_3=\sum_{r=0}^{d-1}|x_r|^3$, $M_2=\max_{0\le r<d}x_r^2$.
For $x\neq 0$,
$
    \rho_3(\tilde{x})
    =
    \frac{S_3}{S_2^{3/2}},
    \,\,
    \|\tilde{x}\|_\infty^2
    =
    \frac{M_2}{S_2}.
$
Therefore, the adaptive GPU-friendly verification step costs $\mathcal O(d)$
time and $\mathcal O(1)$ space. 

In summary, scalar quantization methods such as \cite{ben2024accelerating, vargaftik2021drive} can use the $\rho_3(\tilde{x})$ check to decide whether one RHT is already sufficient, while VQ methods can use the $\|\tilde{x}\|_\infty^2$ check to decide whether the preliminary smoothing RHT in the three-RHT construction can be skipped. 

\else
As we explain in detail in~\ref{app:linearcheck}, scalar quantization methods such as \cite{ben2024accelerating, vargaftik2021drive} can use the $\rho_3(\tilde{x})$ check to decide whether one RHT is already sufficient, while VQ methods can use the $\|\tilde{x}\|_\infty^2$ check to decide whether the preliminary smoothing RHT in the three-RHT construction can be skipped. 
Both $\rho_3(\tilde{x})$ and $\|\tilde{x}\|_\infty^2$ are computable in $O(d)$ time. This way, one can make an efficient and informed choice of the number of RHTs needed to guarantee correctness.
\fi

\section{Conclusion}
\ifarxiv
The transition from dense URRs to fast, structured transformations like the Randomized Hadamard Transform (RHT) is a fundamental system-level optimization for modern quantization frameworks. Historically, however, this transition has carried a heavy theoretical penalty for worst-case inputs, yielding weak bounds and reducing theoretically optimal algorithms to practical heuristics. In this work, we demonstrated that this penalty is not an inherent limitation of structured matrices, but simply a consequence of using an insufficient number of transforms. 

By composing two RHTs, we established universal $\mathcal{O}(d^{-1/2})$ convergence bounds in both the Kolmogorov and 1-Wasserstein metrics for any arbitrary input vector. We demonstrated that these scalar guarantees seamlessly plug into the analyses of state-of-the-art scalar quantization schemes. By doing so, we successfully recovered the exact URR-based guarantees for DRIVE and the Bounded Support Quantization (BSQ) error integrals for QUIC-FL, without the prohibitive computational cost. 

Moving beyond scalar quantization, we identified a critical conditional correlation bottleneck on sparse inputs that structurally breaks 2-RHT configurations for VQ. We resolved this by proving that a 3-RHT sequence successfully decorrelates block coordinates, preserving the theoretical optimality of Gaussian VQ codebooks via Stein's Method. Crucially, to ensure that these robust worst-case guarantees do not burden average-case execution, we established that simple linear-time $\mathcal{O}(d)$ checks on the input's $\ell_3$ and $\ell_\infty$ norms can dynamically dictate whether one, two, or three RHTs are strictly required at runtime.

Our modular probabilistic framework opens several promising directions for future research. For example, while three RHTs are sufficient to guarantee weak dependence for standard fixed-size VQ blocks, exploring a four-RHT (4-RHT) composition presents a natural extension. Specifically, a 4-RHT sequence could provide enhanced independence structures through a fourth RHT step, which may be necessary to formally guarantee decorrelation across non-fixed-size blocks required by next-generation adaptive quantizers. 
\else

This paper addresses a fundamental gap in practitioners' heuristic use of RHT rather than URR for quantization tasks. We prove approximation bounds for the distance between the distribution of individual coordinates and small coordinate blocks resulting from URRs and a sequence of RHTs and demonstrate how they recover optimal theoretical guarantees for recent scalar and vector quantizers.


\fi

\ifarxiv
\else
\textbf{Declaration of LLM Usage:} 
We have used LLMs to explore different proof routes for some of our results through iterative discussion and refinement; to provide an additional source of verification of our results; to correctly identify attribution; to simplify proofs; and to edit. We have ensured that all content is correct, original, and adheres to ethical and academic standards.
\fi

\bibliography{refs}
\bibliographystyle{plain}

\appendix

\section{Deferred analysis for Section 4.2}
\label{app:unbiased_proofs}

In this appendix, we first establish the asymptotic expansion of the scaling constant $c_d$ used by the unbiased DRIVE algorithm, and then provide the formal proof of Theorem \ref{thm:unbiased_drive}.

\subsection{Asymptotic expansion of the scaling constant $c_d$.}
\label{app:cd_expansion}

We now formalize the convergence gap between the scaling constant $c_d$ and the standard Gaussian absolute expectation $c = \sqrt{\frac{2}{\pi}}$, where $g \sim \mathcal{N}(0,1)$.

For a URR matrix $\mathcal{R}_U$, define $\tilde{x}_U \triangleq \mathcal{R}_U \tilde{x}$. Then, $\tilde{x}_U \in \mathcal{S}^{d-1}$ is a point on the unit sphere, distributed uniformly at random. By \cite[Lemma 9]{vargaftik2021drive}, its expected $L_1$ norm evaluates via the Beta function to:
\begin{equation*}
\mathbb{E}[\|\tilde{x}_U\|_1] = \frac{2d}{(d-1) \cdot B\left(\frac{1}{2}, \frac{d-1}{2}\right)} \,\,.
\end{equation*}
Substituting this into $c_d = \frac{1}{\sqrt{d}} \mathbb{E}[\|\tilde{x}_U\|_1]$, we rewrite the Beta function in terms of the Gamma function, $B(x,y) = \frac{\Gamma(x)\Gamma(y)}{\Gamma(x+y)}$. Evaluating $\Gamma(1/2) = \sqrt{\pi}$ yields:
\begin{equation*}
c_d = \frac{2\sqrt{d}}{d-1} \frac{\Gamma(d/2)}{\sqrt{\pi} \Gamma((d-1)/2)} \,\,.
\end{equation*}
Applying the Gamma function recurrence relation $\Gamma(z+1) = z\Gamma(z)$ to the denominator provides $(d-1)\Gamma((d-1)/2) = 2\Gamma((d+1)/2)$. Substituting this simplifies the expression for $c_d$ to:
\begin{equation}
c_d = \sqrt{\frac{d}{\pi}} \frac{\Gamma(d/2)}{\Gamma((d+1)/2)}.
\label{eq:cd_gamma}
\end{equation}

To bound the resulting ratio of Gamma functions, we apply the Tricomi-Erdélyi expansion \cite[Eq. 6.1.47, page 257]{abramowitz1964handbook}. For $z \to \infty$ and bounded constants $a$ and $b$, the expansion is stated strictly using Big-O notation as:
\begin{equation*}
\frac{\Gamma(z+a)}{\Gamma(z+b)} = z^{a-b} \left( 1 + \frac{(a-b)(a+b-1)}{2z} + \mathcal{O}\left(\frac{1}{z^2}\right) \right).
\end{equation*}
Letting $z = d/2$, $a=0$, and $b=1/2$, we obtain:
\begin{align*}
\frac{\Gamma(d/2)}{\Gamma((d+1)/2)} &= \left(\frac{d}{2}\right)^{-1/2} \left( 1 + \frac{(-1/2)(-1/2)}{2(d/2)} + \mathcal{O}\left(\frac{1}{d^2}\right) \right) \\
&= \sqrt{\frac{2}{d}} \left( 1 + \frac{1}{4d} + \mathcal{O}\left(\frac{1}{d^2}\right) \right).
\end{align*}
Applying this exact expansion back into Equation \eqref{eq:cd_gamma} gives:
\begin{equation*}
c_d = \sqrt{\frac{d}{\pi}} \sqrt{\frac{2}{d}} \left( 1 + \frac{1}{4d} + \mathcal{O}\left(\frac{1}{d^2}\right) \right) = \sqrt{\frac{2}{\pi}} \left( 1 + \frac{1}{4d} + \mathcal{O}\left(\frac{1}{d^2}\right) \right).
\end{equation*}

Finally, using $c = \sqrt{2/\pi}$:
\begin{align*}
| c - c_d | = \left| \sqrt{\frac{2}{\pi}} - \sqrt{\frac{2}{\pi}} \left( 1 + \frac{1}{4d} + \mathcal{O}\left(\frac{1}{d^2}\right) \right) \right| = \frac{1}{2d\sqrt{2\pi}} + \mathcal{O}\left(\frac{1}{d^2}\right).
\end{align*}
Thus, the scaling discrepancy $| c - c_d |$ is bounded by $\mathcal{O}(d^{-1})$.

\subsection{Proof of Theorem \ref{thm:unbiased_drive}}

\begin{proof}

For clarity, we divide the proof into five steps.

\textbf{Step 1: Proxy scale closeness.} 
Let $c = \sqrt{\frac{2}{\pi}}$. By Theorem \ref{thm:w1-clt} and the fact that the absolute value function is 1-Lipschitz, the expected absolute value of any individual 2-RHT coordinate differs from $c$ by at most $\mathcal{O}(d^{-1/2})$. Recall that, for any arbitrary unit vector $v \in \mathcal{S}^{d-1}$, all $d$ coordinates of $R_2 v$, share the same marginal distribution. Thus, its scaled expected $L_1$ norm evaluates exactly to this single-coordinate absolute expectation. Thus, we establish a universal uniform bound $\delta$ defining the closeness of the true 2-RHT expectation to our Gaussian proxy value $c$:
\begin{equation}
\delta \triangleq \sup_{v \in \mathcal{S}^{d-1}} \left| \frac{1}{\sqrt{d}}\mathbb{E}[\|R_2 v\|_1] - c \right| = \mathcal{O}(d^{-1/2}).
\label{eq:delta_bound}
\end{equation}
This guarantees that for \emph{any} unit direction, the expected $L_1$ norm after 2-RHT is in $[c-\delta, c+\delta]$.

\textbf{Step 2: The parallel component.} 
We decompose the expected direction $\mu = \frac{1}{\sqrt{d}}\mathbb{E}[R_2^{-1}\text{sign}(R_2\tilde{x})]$ into its parallel projection along $\tilde{x}$ and an orthogonal error vector $\mu_\perp$. 
For the parallel component, using $R_2^{-1}=R_2^T$, the identity $\langle R_2^\top a, b\rangle = \langle a, R_2 b\rangle$ and the property $\langle \text{sign}(y), y\rangle = \|y\|_1$, we have:
\begin{equation*}
\langle\mu,\tilde{x}\rangle = \frac{1}{\sqrt{d}}\mathbb{E}[\langle \text{sign}(R_2\tilde{x}), R_2\tilde{x}\rangle] = \frac{1}{\sqrt{d}}\mathbb{E}[\|R_2\tilde{x}\|_1] \,\,.
\end{equation*}
By applying Equation \eqref{eq:delta_bound}, the parallel projection is bounded, that is: $\langle\mu,\tilde{x}\rangle \in [c-\delta, c+\delta]$.

\textbf{Step 3: The orthogonal component.}
For the orthogonal component $\mu_\perp$, let $u$ be the unit vector pointing in its direction, ensuring $\langle\tilde{x},u\rangle=0$ and $\langle\mu,u\rangle=\|\mu_\perp\|_2$. Let $t > 0$ be a positive scalar and consider the perturbed vector $R_2(\tilde{x} + tu)$. The absolute value inequality dictates, 
$$|A+B| \ge |A| + \text{sign}(A) \cdot B \,\,.$$ 
Applying this pointwise to every coordinate of the perturbed vector and summing them yields:
\begin{equation*}
\|R_2(\tilde{x} + tu)\|_1 \ge \|R_2\tilde{x}\|_1 + \langle \text{sign}(R_2\tilde{x}), R_2 tu \rangle = \|R_2\tilde{x}\|_1 + t \langle \text{sign}(R_2\tilde{x}), R_2 u \rangle.
\end{equation*}
Taking expectations, multiplying by $\frac{1}{\sqrt{d}}$, and applying the identity, 
$$\mathbb{E}[\langle \text{sign}(R_2 \tilde{x}), R_2 u \rangle] = \mathbb{E}[\langle R_2^\top \text{sign}(R_2\tilde{x}), u \rangle] = \langle \sqrt{d}\mu, u \rangle \,\,,$$ 
bounds the orthogonal magnitude:
\begin{equation}
\frac{1}{\sqrt{d}}\mathbb{E}[\|R_2(\tilde{x} + tu)\|_1] \ge \frac{1}{\sqrt{d}}\mathbb{E}[\|R_2\tilde{x}\|_1] + t\langle\mu,u\rangle \,\,.
\label{eq:orthogonal_ineq}
\end{equation}
By the Pythagorean theorem, the length of the perturbed input vector is $\|\tilde{x} + tu\|_2 = \sqrt{1+t^2}$. Because the $L_1$ norm expectation scales linearly with vector length, applying \eqref{eq:delta_bound} upper-bounds the left side by $(c+\delta)\sqrt{1+t^2}$. We lower-bound the right side using $c-\delta + t\|\mu_\perp\|_2$. Substituting these limits into \eqref{eq:orthogonal_ineq} yields:
\begin{equation*}
(c+\delta)\sqrt{1+t^2} \ge c-\delta + t\|\mu_\perp\|_2.
\end{equation*}
Using the standard scalar upper bound $\sqrt{1+t^2} \le 1 + \frac{t^2}{2}$, expanding the left side, and subtracting $c-\delta$ results in:
\begin{equation*}
2\delta + \frac{c+\delta}{2}t^2 \ge t\|\mu_\perp\|_2.
\end{equation*}
If $\|\mu_\perp\|_2 = 0$, the bound holds trivially. Otherwise, dividing by $t>0$ and setting $t=2\sqrt{\frac{\delta}{c+\delta}}$ yields:
\begin{equation*}
\|\mu_\perp\|_2 \le 2\sqrt{\delta(c+\delta)} = \mathcal{O}(\sqrt{\delta}) = \mathcal{O}(d^{-1/4}).
\end{equation*}

\textbf{Step 4: Bridging to the algorithm's scale.} 
By the Pythagorean theorem, the squared Euclidean distance from $\mu$ to the proxy target $c\tilde{x}$ is bounded by:
\begin{equation*}
\|\mu - c\tilde{x}\|_2^2 = (\langle\mu,\tilde{x}\rangle - c)^2 + \|\mu_\perp\|_2^2 \le \delta^2 + \mathcal{O}(\delta) = \mathcal{O}(\delta) = \mathcal{O}(d^{-1/2}).
\end{equation*}
The algorithm, however, uses $c_d$. As derived in Appendix \ref{app:cd_expansion}, $c_d$ converges to $c$ by $|c_d - c| = \mathcal{O}(d^{-1})$. Thus, applying the triangle inequality yields:
\begin{equation*}
\|\mu - c_d\tilde{x}\|_2 \le \|\mu - c\tilde{x}\|_2 + \|(c - c_d)\tilde{x}\|_2 \le \mathcal{O}(d^{-1/4}) + \mathcal{O}(d^{-1}) = \mathcal{O}(d^{-1/4}).
\end{equation*}
Squaring this yields $\|\mu - c_d\tilde{x}\|_2^2 \le \mathcal{O}(d^{-1/2})$. 

Recall that $\|B(x)\|_2^2 = \frac{\|x\|_2^2}{c_d^2}\|\mu - c_d\tilde{x}\|_2^2$. 
Dividing by $\|x\|_2^2$ proves Equation (\ref{eq:i}):
\begin{equation*}
\frac{\|B(x)\|_2^2}{\|x\|_2^2} = \frac{1}{c_d^2} \|\mu - c_d\tilde{x}\|_2^2 \le \mathcal{O}(d^{-1/2}) \,\,.
\end{equation*}

\textbf{Step 5: Computing the variance.} 
The relative variance we evaluate is $\frac{\mathrm{Var}(\hat{x})}{\|x\|_2^2} = \frac{\mathbb{E}[\|\hat{x}\|_2^2] - \|\mathbb{E}[\hat{x}]\|_2^2}{\|x\|_2^2}$.
Because the 2-RHT matrix preserves lengths due to its orthogonality, the uncentered $\ell_2$ norm of any $d$-dimensional $\pm 1$ sign vector is identically $\sqrt{d}$. Thus, the uncentered squared norm is deterministic:
\begin{equation*}
\mathbb{E}[\|\hat{x}\|_2^2] = S^2 d = \left(\frac{\|x\|_2}{c_d\sqrt{d}}\right)^2 d = \frac{\|x\|_2^2}{c_d^2}.
\end{equation*}
Dividing by $\|x\|_2^2$, the relative uncentered squared norm is exactly $1/c_d^2$. For the squared mean, we evaluate $\frac{\|\mathbb{E}[\hat{x}]\|_2^2}{\|x\|_2^2} = \frac{1}{c_d^2} \|\mu\|_2^2$. Since $\|\mu\|_2^2 = \langle\mu,\tilde{x}\rangle^2 + \|\mu_\perp\|_2^2 = (c \pm \delta)^2 + \mathcal{O}(d^{-1/2}) = c^2 \pm \mathcal{O}(d^{-1/2})$, we subtract this from the relative uncentered norm:
\begin{equation*}
\frac{\mathrm{Var}(\hat{x})}{\|x\|_2^2} = \frac{1}{c_d^2} - \frac{1}{c_d^2}(c^2 \pm \mathcal{O}(d^{-1/2})) = \frac{1 - c^2}{c_d^2} \pm \mathcal{O}(d^{-1/2}).
\end{equation*}
As established, $c_d = c + \mathcal{O}(d^{-1})$, which directly implies $\frac{1}{c_d^2} = \frac{1}{c^2} + \mathcal{O}(d^{-1})$. We substitute this into the leading term, absorbing the $\mathcal{O}(d^{-1})$ difference into the $\mathcal{O}(d^{-1/2})$ error. Plugging in $c = \sqrt{2/\pi}$  evaluates to $\frac{1-c^2}{c^2} = \frac{1}{c^2} - 1 = \frac{\pi}{2} - 1$, proving Equation (\ref{eq:ii}):
\begin{equation*}
\frac{\mathrm{Var}(\hat{x})}{\|x\|_2^2} = \frac{\pi}{2} - 1 + \mathcal{O}(d^{-1/2}) \,\,. \qedhere
\end{equation*}
\end{proof}

\section{Deferred analysis for Section \ref{sec:quic:error}}
\label{app:quic_proofs}

\ifarxiv
\else
\subsection{Improved vNMSE bounds for QUIC-FL}\label{app:quic-vNMSE}
To bound the actual quantization error (vNMSE) of BSQ, we define the expected squared error function conditioned on the coordinate value: $e(z) = \mathbb{E}[(\hat{z} - z)^2 \mid z]$. For $|z| \le t_p$, $e(z)$ is defined by the stochastic quantizer. For $|z| > t_p$, the values are sent exactly, meaning $e(z) = 0$.

This creates a jump discontinuity at the thresholds $\pm t_p$. Because the error drops to zero, the function is non-Lipschitz. Consequently, the 1-Wasserstein metric used in Section \ref{sec:drive} cannot be applied to bound this expected error. Instead, we use the Kolmogorov distance ($d_K$), which can bound the expectation of any function with bounded total variation ($TV$). For a function with jump discontinuities, the total variation is appropriately defined by $TV(f) = \sup_{z_0 < \dots < z_n} \sum_{j=1}^n |f(z_j) - f(z_{j-1})| < \infty$.

\begin{theorem}[Quantization Error with Two RHTs]\label{thm:quic_error}
Let $e:\mathbb{R} \to \mathbb{R}$ be right-continuous, of bounded variation and, with finite limits at $\pm\infty$ (representing the expected squared error function for BSQ). Let $TV(e)$ denote its total variation. For any input $x \in \mathbb{R}^d \setminus \{0\}$, the expected quantization error of a scaled transformed coordinate $U(x) = \sqrt{d}(R_2 \tilde{x})_1$ after two RHTs satisfies:
\[
\Big| \mathbb{E}[e(U(x))] - \mathbb{E}[e(G)] \Big| \le TV(e) d_K(U(x), G) \le \frac{1.28 \cdot TV(e)}{\sqrt{d}} \,\,,
\]
where $G \sim \mathcal{N}(0, 1)$. 
\end{theorem}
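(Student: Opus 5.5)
The plan is to write both expectations as Lebesgue--Stieltjes integrals against the CDFs, integrate by parts so the difference involves only $F_U-F_G$ integrated against the signed measure $de$, and then bound by $\sup|F_U-F_G|\cdot TV(e)$. Write $F_U(t)=\Pbb(U(x)\le t)$ and $F_G(t)=\Pbb(G\le t)$. Since $e$ is of bounded variation and has finite limits at $\pm\infty$, $e$ is bounded. Hence both $\E[e(U(x))]$ and $\E[e(G)]$ are finite. Because $e$ is right-continuous of bounded variation, it induces a finite signed Borel measure $\nu$ with $e(z)=e(-\infty)+\nu((-\infty,z])$ and total mass $|\nu|(\R)=TV(e)$.

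The key step is a Fubini representation that avoids boundary terms altogether. For any random variable $V$,
\[
\E[e(V)] = e(-\infty) + \E\bigl[\nu((-\infty,V])\bigr] = e(-\infty) + \int_{\R} \Pbb(V\ge s)\,\nu(ds),
\]
which uses $\mathbf 1\{s\le V\}$ and Fubini. Fubini applies because $|\nu|$ is finite and the integrand is bounded. Applying this to $V=U(x)$ and $V=G$ and subtracting gives
\[
\E[e(U(x))]-\E[e(G)] = \int_{\R}\bigl(\Pbb(U(x)\ge s)-\Pbb(G\ge s)\bigr)\nu(ds) = \int_{\R}\bigl(F_G(s^-)-F_U(s^-)\bigr)\nu(ds).
\]
Here $F(s^-)$ denotes a left limit.

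It remains to bound the integrand. One has $|F_U(s^-)-F_G(s^-)|\le d_K(U(x),G)$, since it is a limit of quantities each bounded by the supremum. Taking absolute values then gives
\[
\bigl|\E[e(U(x))]-\E[e(G)]\bigr|\le d_K(U(x),G)\,|\nu|(\R)=TV(e)\,d_K(U(x),G).
\]
Invoking Theorem~\ref{thm:scalar-clt}, which gives $d_K\le 1.28/\sqrt d$, yields the second inequality.

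The main obstacle is measure-theoretic bookkeeping rather than any estimate. Three points need care: identifying $|\nu|(\R)$ with the stated pointwise-partition $TV(e)$; handling the atoms of $\nu$ at the jump points $\pm t_p$, where $U(x)$ itself may have atoms; and checking that the $\ge$ versus $\le$ conventions are consistent. The Fubini form above sidesteps boundary terms and avoids needing continuity of $F_U$, since $U(x)$ is discrete. The only subtle point is recognizing that left limits of $F_U-F_G$ are still controlled by the Kolmogorov supremum.
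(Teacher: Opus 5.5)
Your proof is correct and is essentially the paper's argument. Both reduce the difference to $\pm\int (F_U(s^-)-F_G(s^-))\,de(s)$ and bound it by $d_K(U(x),G)\cdot TV(e)$; the paper gets there by Lebesgue--Stieltjes integration by parts, and you get there by the equivalent Fubini (layer-cake) identity. One small advantage of your form is that no boundary terms arise. The paper disposes of its boundary terms using the BSQ-specific fact $e(z)=0$ for $|z|>t_p$, which is not among the theorem's stated hypotheses, although boundedness of $e$ together with $F_U-F_G\to 0$ at $\pm\infty$ would also suffice there.
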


\textbf{Proof intuition.} According to Theorem \ref{thm:scalar-clt}, a scaled transformed coordinate $U(x)=\sqrt d (R_2\tilde x)_1$ is close to Gaussian in Kolmogorov distance. Since Kolmogorov distance controls how much probability the two distributions assign below any threshold, and since $TV(e)$ measures the total amount by which the quantization-error function can change, the expected value of $e$ can change by at most their product. The complete proof is deferred to Appendix \ref{app:quic_proofs:thm7}.

\textbf{Refined worst-case error bounds.} In \cite[Theorem G.3]{ben2024accelerating}, computing worst-case bounds for the quantization error $E_b$ required partitioning the support into coarse intervals (e.g., $[0, 1.5]$) and penalizing the entire probability mass of that interval with the maximal error occurring at its boundary. This Riemann sum overestimation was necessary due to the 1-RHT multiplicative tail bound, leading to inflated theoretical bounds (e.g., producing 1-bit quantization error $E_1 \le 4.831$ as $d \to \infty$).

By substituting 1-RHT with 2-RHT, the theoretical vNMSE bounds converge to the continuous values (identified numerically in \cite[Section 3.5]{ben2024accelerating}).
Transitioning from 1-RHT to 2-RHT thus recovers the desired theoretical error limits up to a decaying $\mathcal{O}(d^{-1/2})$ term.

\begin{table}[t]
\centering
\begin{tabular}{|c|c|c|c|}
\hline
\textbf{Bit Budget} & \textbf{1-RHT Bound} (Thm G.3) & \textbf{2-RHT Bound} & \textbf{Improvement} \\ \hline
$b=1$ & $E_1 \le 4.831$ & $E_1 \approx 1.520$ & $\mathbf{3.17\times}$ \textbf{tighter} \\ \hline
$b=2$ & $E_2 \le 0.692$ & $E_2  \approx 0.223$ & $\mathbf{3.10\times}$ \textbf{tighter} \\ \hline
$b=3$ & $E_3 \le 0.131$ & $E_3 \approx 0.044$ & $\mathbf{2.97\times}$ \textbf{tighter} \\ \hline
$b=4$ & $E_4 \le 0.0272$ & $E_4 \approx 0.0098$ & $\mathbf{2.77\times}$ \textbf{tighter} \\ \hline
\end{tabular}
\vspace{1mm}
\caption{Comparing the $b-bit$ vNMSE, i.e., $E_b$, as a function of the bit-budget $b$ for 1-RHT and 2-RHT, adapting results from \cite{ben2024accelerating}.}
\end{table}
\fi

\ifarxiv
\subsection*{Proof of Theorem \ref{thm:quic_error}}\label{app:quic_proofs:thm7}
\else
\subsection{Proof of Theorem \ref{thm:quic_error}}\label{app:quic_proofs:thm7}
\fi

\begin{proof}
Let $F_U(z) = \mathbb{P}(U(x) \le z)$ and $F_U(z^-) = \mathbb{P}(U(x) < z)$ where $U(x) = \sqrt{d} \cdot (R_2 \tilde{x})_1$ be the CDF of the 2-RHT scaled coordinate, and let $F_G(z) = \mathbb{P}(G \le z) = \mathbb{P}(G < z)$ be the CDF of the standard Gaussian $G \sim \mathcal{N}(0,1)$.
Let $de$ denote the signed Stieltjes measure induced by the right-continuous bounded-variation function $e$.

We formulate the difference in the expected quantization error given by $e(z) = \mathbb{E}[(\hat{z} - z)^2 \mid z]$ between $U(x)$ and $G$ using the Lebesgue-Stieltjes integral over the real line:
\begin{align*}
\Big| \mathbb{E}[e(U(x))] - \mathbb{E}[e(G)] \Big| &= \left| \int_{-\infty}^\infty e(z) \, dF_U(z) - \int_{-\infty}^\infty e(z) \, dF_G(z) \right| \\
&= \left| \int_{-\infty}^\infty e(z) \, d\big(F_U(z) - F_G(z)\big) \right| \,\,.
\end{align*}

To isolate the CDF difference, we apply integration by parts for Lebesgue-Stieltjes integrals:
\[
\int_{-\infty}^\infty e(z) \, d\big(F_U(z) - F_G(z)\big)
=
\Big[ e(z) \big(F_U(z) - F_G(z)\big) \Big]_{-\infty}^\infty
-
\int_{-\infty}^\infty \big(F_U(z^-) - F_G(z^-)\big) \, de(z) \,\,.
\]

We first evaluate the boundary terms. By the definition of Bounded Support Quantization, coordinates outside $[-t_p, t_p]$ are sent without quantization error, meaning $e(z) = 0$ for all $|z| > t_p$. Consequently, the boundary evaluations vanish:
\[
\lim_{z \to \infty} e(z) \big(F_U(z) - F_G(z)\big) = 0 \quad \text{and} \quad \lim_{z \to -\infty} e(z) \big(F_U(z) - F_G(z)\big) = 0 \,\,.
\]

This leaves only the integral term. Taking the absolute value yields:
\begin{align*}
\Big| \mathbb{E}[e(U(x))] - \mathbb{E}[e(G)] \Big|
&= \left| - \int_{-\infty}^\infty \big(F_U(z^-) - F_G(z^-)\big) \, de(z) \right| \\
&\le \int_{-\infty}^\infty \big| F_U(z^-) - F_G(z^-) \big| \, |de|(z) \,\,.
\end{align*}

By definition, the Kolmogorov distance uniformly bounds the absolute difference between the CDFs at any real point:
$\sup_{z} |F_U(z) - F_G(z)| = d_K(U(x), G)$.
The same bound holds for $F_U(z^-) - F_G(z^-)$. Therefore, we pull this supremum outside the integral:
\[
\int_{-\infty}^\infty \big| F_U(z^-) - F_G(z^-) \big| \, |de|(z)
\le
d_K(U(x), G) \int_{-\infty}^\infty |de|(z) \,\,.
\]
The remaining integral term, $\int_{-\infty}^\infty |de|(z)$, corresponds to the Total Variation of the error function $e(z)$, denoted $TV(e)$. To properly account for the jump discontinuities at $\pm t_p$, the Total Variation is defined by the supremum over partitions:
\[
TV(e) = \sup_{z_0 < \dots < z_n} \sum_{j=1}^n \big|e(z_j) - e(z_{j-1})\big| \,\,.
\]
If $e(z)$ is piecewise absolutely continuous, this equals the sum of the integrals of $|e'(z)|$ on the smooth pieces plus the magnitudes of all jump discontinuities.

Substituting $TV(e)$ and applying the 2-RHT Kolmogorov bound derived in Theorem \ref{thm:scalar-clt} gives the desired bound:
\[
\Big| \mathbb{E}[e(U(x))] - \mathbb{E}[e(G)] \Big| \le TV(e) \, d_K(U(x), G) \le \frac{1.28 \cdot TV(e)}{\sqrt{d}} \,\,. \qedhere
\]
\end{proof}

\section{Deferred analysis for Section \ref{sec:3rht}}
\label{app:vq_proofs}

\subsection{Proof of Lemma \ref{lem:2rht_corr}}\label{sec:lem:2rht_corr}

\begin{proof}

For clarity, we divide the proof into six steps.

\textbf{Step 1: Definitions.} Let $\tilde{x} \in S^{d-1}$ and $U = \sqrt{d} R_2 \tilde{x} = \sqrt{d} \NH D_2 \NH D_1 \tilde{x}$. Define the intermediate vector $a = \NH D_1 \tilde{x}$. 
Recalling $\sqrt{d} \NH = H$, we can write $U = H D_2 a$. Conditioned on $a$, the randomness of $D_1$ is fixed, and the $i$-th coordinate of $U$ is given by $U_i = \sum_{r=0}^{d-1} H_{ir} \varepsilon_r a_r$.

\vspace{0.5em}
\textbf{Step 2: Formulating the covariance.} We seek the conditional covariance between two distinct coordinates $U_i$ and $U_j$ ($i \neq j$) over the randomness of $D_2$. Since $\E[\varepsilon_r] = 0$, the expected value $\E[U_i \mid a] = 0$, making the covariance equal to $\E[U_i U_j \mid a]$:
\[
\Cov(U_i, U_j \mid a) = \E[U_i U_j \mid a] = \E \left[ \left(\sum_r H_{ir} \varepsilon_r a_r\right) \left(\sum_s H_{js} \varepsilon_s a_s\right) \right]\,\,.
\]
Because the signs $\varepsilon$ are independent Rademacher variables, $\E[\varepsilon_r \varepsilon_s] = 0$ for $r \neq s$, and $\E[\varepsilon_r^2] = 1$. The double sum thereby collapses to:
\[
\Cov(U_i, U_j \mid a) = \sum_{r=0}^{d-1} H_{ir} H_{jr} a_r^2\,\,.
\]

\vspace{0.5em}
\textbf{Step 3: The XOR property.}
A fundamental property of the Sylvester-Hadamard matrix (using 0-based indexing) is that the product of two elements in the same column is given by the bitwise XOR of their row indices: $H_{ir} H_{jr} = H_{i \oplus j, r}$. Substituting this into our covariance yields:
\[
\Cov(U_i, U_j \mid a) = \sum_{r=0}^{d-1} H_{i \oplus j, r} a_r^2\,\,.
\]

\vspace{0.5em}
\textbf{Step 4: Expanding the intermediate vector $a$.}
Recall $a_r = \frac{1}{\sqrt{d}} \sum_{l=0}^{d-1} H_{rl} s_l \tilde{x}_l$. Squaring yields:
\[
a_r^2 = \frac{1}{d} \sum_{l=0}^{d-1} \sum_{m=0}^{d-1} H_{rl} H_{rm} s_l s_m \tilde{x}_l x_m\,\,.
\]
Separating the diagonal ($l=m$) from the cross-terms ($l \neq m$) and applying the XOR property $H_{rl}H_{rm} = H_{r, l \oplus m}$, we get:
\[
a_r^2 = \frac{1}{d} \sum_{l=0}^{d-1} \tilde{x}_l^2 + \frac{1}{d} \sum_{l \neq m} H_{r, l \oplus m} s_l s_m \tilde{x}_l x_m\,\,.
\]

\vspace{0.5em}
\textbf{Step 5: Canceling the independent sum.}
We substitute $a_r^2$ back into the covariance equation from Step 3. 
For the first term, because $i \neq j$, we know $i \oplus j \neq 0$. The sum of any non-zero row of a Hadamard matrix is $0$ ($\sum_r H_{i \oplus j, r} = 0$), perfectly eliminating the independent $\frac{1}{d}\sum \tilde{x}_l^2$ term. 
For the cross-terms, swapping the order of summation and using that $H$ is symmetric yields:
\[
\Cov(U_i, U_j \mid a) = \frac{1}{d} \sum_{l \neq m} s_l s_m \tilde{x}_l x_m \left( \sum_{r=0}^{d-1} H_{i \oplus j, r} H_{l \oplus m, r} \right)\,\,.
\]
The inner sum is the dot product of two Hadamard rows, which equals $d$ if $i \oplus j = l \oplus m$, and $0$ otherwise. This cancels the $\frac{1}{d}$ and acts as an indicator function $\mathbb{I}(i \oplus j = l \oplus m)$, simplifying the covariance to:
\[
\Cov(U_i, U_j \mid a) = 2 \sum_{l < m \,:\, l \oplus m = i \oplus j} s_l s_m \tilde{x}_l x_m\,\,.
\]

\vspace{0.5em}
\textbf{Step 6: The sparse input counterexample.}
Consider $i=0$, $j=1$, and
\[
\tilde{x}=\frac{e_0+e_1}{\sqrt 2}\in S^{d-1}.
\]
Then $i\oplus j=1$. In the covariance formula from Step 5, the only nonzero pair satisfying
$l<m$ and $l\oplus m=i\oplus j=1$ is $(l,m)=(0,1)$. Therefore,
\[
\Cov_{D_2}(U_0,U_1\mid a)
=
2s_0s_1\tilde{x}_0\tilde{x}_1
=
s_0s_1
\in\{\pm1\}.
\]
Moreover,
\[
\Var_{D_2}(U_0\mid a)=\sum_{r=0}^{d-1}H_{0r}^2a_r^2=\|a\|_2^2=1
\]
and similarly $\Var_{D_2}(U_1\mid a)=1$. Hence, the conditional Pearson correlation is exactly $\pm1$. \qedhere

\end{proof}

\subsection{Proof of Theorem \ref{thm:3rht_decorr}}\label{sec:lem:3rht_decorr}
\begin{proof}
For clarity, we divide the proof into five steps.

\textbf{Step 1: Reduction to the 2-RHT case.}
Consider $U = \sqrt{d} R_3 \tilde{x} = \sqrt{d} \NH D_3 \NH D_2 (\NH D_1 \tilde{x})$. 
Let $y = \NH D_1 \tilde{x} = R_1 \tilde{x}$ be the output of the first RHT. If we condition on $y$, the final two RHTs $\sqrt{d} \NH D_3 \NH D_2 y$ match the structure of the 2-RHT transformation analyzed in Lemma \ref{lem:2rht_corr}.
Therefore, we can apply the previous conditional covariance formula, replacing the original input $\tilde{x}$ with $y$, and using $s$ to denote the random signs of the intermediate matrix $D_2$:
\[
C_{i,j}(y, D_2) = 2 \sum_{l < m \,:\, l \oplus m = i \oplus j} s_l s_m y_l y_m \,\,.
\]

\vspace{0.5em}
\textbf{Step 2: The expected covariance is zero.}
We analyze the expected value of this conditional covariance over the randomness of $D_2$. The sum iterates over indices where $l < m$, where $l \neq m$. Because the Rademacher signs of $D_2$ are independent and zero-mean, $\E_{D_2}[s_l s_m] = \E[s_l]\E[s_m] = 0$. Due to the linearity of expectation, the expression evaluates to zero for any given $y$: $\E_{D_2}[C_{i,j}(y, D_2)] = 0$.

\vspace{0.5em}
\textbf{Step 3: Variance over disjoint pairs.}
To bound the magnitude of the conditional correlation, we compute its variance over $D_2$. Since the mean is zero, for any given $y$, the variance is 
$$\Var_{D_2}(C_{i,j}(y, D_2)) = \E_{D_2}[C_{i,j}(y, D_2)^2]\,\,.$$
When expanding the squared sum, we generate squared terms and cross-terms. The condition $l \oplus m = i \oplus j$ partitions the $d$ indices into exactly $d/2$ uniquely matched disjoint pairs. Because the pairs are entirely disjoint, the cross-terms consist of independent Rademacher products which vanish under expectation. We are left  with the expected values of the squared terms. Since $(s_l s_m)^2 = 1$:
\[
\E_{D_2}[C_{i,j}(y, D_2)^2] = 4 \sum_{l < m \,:\, l \oplus m = i \oplus j} y_l^2 y_m^2 \,\,.
\]

\vspace{0.5em}
\textbf{Step 4: Bounding with the maximum norm.}
Let $\alpha=i\oplus j$. Since $i\neq j$, we have $\alpha\neq 0$.
We can replace the strict $l<m$ summation by a sum over all
$l\in\{0,\dots,d-1\}$, with $m=l\oplus\alpha$ (which is equivalent to $l\oplus m=\alpha$). 
Since $\alpha\neq 0$, the map $l\mapsto l\oplus\alpha$ pairs each index with a distinct index. Therefore,
the sum over all $l$ counts each unordered pair twice:
$$
\sum_{l=0}^{d-1} y_l^2 y_{l\oplus\alpha}^2
=
2\sum_{l<m:\,l\oplus m=\alpha} y_l^2y_m^2.
$$

Using the expression from Step 3, $\alpha=i\oplus j$ and the above yields,
$$
\E_{D_2}[C_{i,j}(y,D_2)^2]
=
4\sum_{l<m:\,l\oplus m=\alpha} y_l^2y_m^2 = 2\sum_{l=0}^{d-1} y_l^2 y_{l\oplus\alpha}^2\,\,.
$$

We bound this sum using the maximum squared element of $y$, denoted by
$\|y\|_\infty^2$:
$$
\begin{aligned}
\E_{D_2}[C_{i,j}(y,D_2)^2]
=
2\sum_{l=0}^{d-1} y_l^2 y_{l\oplus\alpha}^2 
\le
2\max_k(y_k^2)\sum_{l=0}^{d-1}y_l^2 
=
2\|y\|_\infty^2\sum_{l=0}^{d-1}y_l^2 \,\,.
\end{aligned}
$$

Because the first RHT preserves the $\ell_2$ norm of the input vector
$\tilde{x}\in S^{d-1}$, $\|y\|_2^2=1$. This yields
$$
\E_{D_2}[C_{i,j}(y,D_2)^2]
\le
2\|y\|_\infty^2 \,\,.
$$

\vspace{0.5em}
\textbf{Step 5: Bounding via the First RHT.}
The vulnerability of the 2-RHT was the existence of highly sparse inputs. By passing $\tilde{x}$ through the first RHT $R_1$, standard sub-Gaussian union bounds (e.g., see \cite[Lemma 7]{theertha2017distributed} or \cite{ailon2006approximate}) dictate that $\E_{D_1}\|R_1 \tilde{x}\|_\infty^2 \le \mathcal{O}\left(\frac{\log d}{d}\right)$.
Consequently, taking the expectation over the randomness of the first RHT (i.e., over $D_1$) and calculating the RMS, the conditional covariance between any pair of coordinates decays:
\[
\left(\E_{D_1,D_2}[C_{i,j}(R_1 \tilde{x}, D_2)^2]\right)^{1/2} \le \left(2\,\E_{D_1}[\|R_1 \tilde{x}\|_\infty^2]\right)^{1/2} \le \mathcal{O}\left(\sqrt{\frac{\log d}{d}}\right) \,\,. \qedhere
\]

\end{proof}

\subsection{Proof of Theorem \ref{thm:3rht_vq_error}}\label{sec:lem:3rht_vq_error}

\subsubsection{Main proof.}

\begin{proof}
For clarity, we divide the proof into five steps.

\textbf{Step 1: Deconstructing the matrix multiplication.}
Conditioned on the intermediate vector $y = R_2 \tilde{x}$, the last RHT applies the transformation $U = \sqrt{d} \NH D_3 y = H D_3 y$. It is sufficient to analyze only the first $k$ coordinates of $U$, which we denote as the block $U_{0:k-1}$. 

Let $H_{0:k-1}$ be the sub-matrix consisting of only the first $k$ rows of the Hadamard matrix. We can write our output block as:
\[
U_{0:k-1} = H_{0:k-1} (D_3 y)\,\,.
\]
The vector $D_3 y$ is simply the column vector $y$ with its coordinates multiplied by the random Rademacher signs $\varepsilon_j$ from $D_3$. Therefore, the $j$-th element of $D_3 y$ is exactly $\varepsilon_j y_j$.

By definition, multiplying $H_{0:k-1}$ by a column vector produces a linear combination of the \emph{columns} of $H_{0:k-1}$. Let $H^{(j)}$ denote the $j$-th column of $H_{0:k-1}$ (which is a deterministic vector of length $k$). The multiplication expands to:
\[
U_{0:k-1} = \sum_{j=0}^{d-1} (\varepsilon_j y_j) H^{(j)}\,\,.
\]
To separate the random signs $\varepsilon$ from the deterministic values (i.e., conditioned on $y$), we group the deterministic parts together to define a set of $d$ fixed ``step vectors'', denoted as $V_j$:
\[
V_j = y_j H^{(j)}\,\,.
\]
Notice that the $i$-th coordinate of this step vector is exactly $(V_j)_i = H_{i,j} y_j$. Substituting this definition back into our sum, we have:
\[
U_{0:k-1} = \sum_{j=0}^{d-1} \varepsilon_j V_j\,\,.
\]

Now, calculating the conditional covariance matrix $\Sigma_y$ of this block becomes straightforward. By definition, $\Sigma_y = \E[U_{0:k-1} U_{0:k-1}^\top \mid y]$. Because the signs $\varepsilon$ are independent and zero-mean, all cross-terms $\E[\varepsilon_j \varepsilon_m]$ vanish when $j \neq m$. We are left only with the sum of the outer products of the individual step vectors:
\[
\Sigma_y = \sum_{j=0}^{d-1} \E[\varepsilon_j^2] V_j V_j^\top = \sum_{j=0}^{d-1} V_j V_j^\top\,\,.
\]
Looking at the specific entries of this $k \times k$ covariance matrix:
\begin{itemize}
    \item \textbf{Diagonal entries (Variance):} For coordinate $i$, this is $\sum_j (V_j)_i^2 = \sum_j (H_{i,j} y_j)^2$. Since the Hadamard matrix consists of $\pm 1$, $H_{i,j}^2 = 1$, simplifying the sum to $\sum_j y_j^2 = \|y\|_2^2 = 1$.
    \item \textbf{Off-diagonal entries (Covariance):} For distinct coordinates $i$ and $l$, this is $\sum_j (V_j)_i (V_j)_l = \sum_j H_{i,j} H_{l,j} y_j^2$. Using the Hadamard XOR property, this becomes exactly $\sum_j H_{i \oplus l, j} y_j^2$, which matches exactly the cross-correlation equation $C_{i,l}(R_1 \tilde{x})$ established in Theorem \ref{thm:3rht_decorr}.
\end{itemize}

\vspace{0.5em}
\textbf{Step 2: Bounding the covariance difference.}
Write the conditioned vector as
\[
y=R_2\tilde{x}=\NH D_2 y^{(1)},\qquad y^{(1)}=R_1\tilde{x}.
\]
The diagonal entries of $\Sigma_y$ are equal to $1$. For $i\neq \ell$, the off-diagonal entry satisfies
\[
(\Sigma_y)_{i\ell}
=
\sum_{j=0}^{d-1}H_{ij}H_{\ell j}y_j^2
=
C_{i,\ell}(y^{(1)},D_2),
\]
where $C_{i,\ell}$ is the conditional covariance from Theorem~\ref{thm:3rht_decorr}. Therefore, by Jensen's inequality and Theorem~\ref{thm:3rht_decorr},
\begin{align*}
\E\big[\|\Sigma_y-I_k\|_F\big]
&\le
\left(\E\big[\|\Sigma_y-I_k\|_F^2\big]\right)^{1/2} =
\left(\sum_{i\neq \ell}\E\big[C_{i,\ell}(R_1\tilde{x},D_2)^2\big]\right)^{1/2} \\
&\le
\sqrt{2k(k-1)}
\left(\E\big[\|R_1\tilde{x}\|_\infty^2\big]\right)^{1/2} \le
\mathcal{O}\left(k\sqrt{\frac{\log d}{d}}\right).
\end{align*}

\vspace{0.5em}
\textbf{Step 3: Formalizing the Wasserstein bound via Stein's method.}
To bound the distance between our block $U_{0:k-1}$ and a standard Gaussian $Z \sim \mathcal{N}(0, I_k)$, we use a multivariate normal approximation lemma proved in \ref{lemma:mv_stein} that is based on \cite[Theorem 2.15 and Eq.~(3.5)]{raivc2018multivariate}. For a sum of independent, zero-mean random vectors $W = \sum_{j=0}^{d-1} X_j$ with covariance $\Sigma = \sum_{j=0}^{d-1} \E[X_j X_j^\top]$, its 1-Wasserstein distance to a standard Gaussian is bounded by:
\[
W_1\Big(W, \mathcal{N}(0, I_k)\Big) \le A_k \left( \|\Sigma - I_k\|_F + \sum_{j=0}^{d-1} \E\big[\|X_j\|_2^3\big] \right)
\]
where $A_k$ is a constant that depend on $k$.

In our setting, conditioned on the intermediate vector $y$, our block is the sum $U_{0:k-1} = \sum_{j=0}^{d-1} \varepsilon_j V_j$. The summands $X_j = \varepsilon_j V_j$ are independent because the Rademacher signs $\varepsilon_j$ are independent. They are zero-mean ($\E[\varepsilon_j V_j \mid y] = 0$), and their conditional covariance is $\Sigma_y$ as derived in Step 1.

Because $\varepsilon_j \in \{-1, 1\}$, the third absolute moment of each summand simplifies to:
\[
\E\big[\|\varepsilon_j V_j\|_2^3 \mid y\big] = \|V_j\|_2^3 \E\big[|\varepsilon_j|^3\big] = \|V_j\|_2^3 \,\,.
\]
We calculate the length of each step vector as $\|V_j\|_2 = \sqrt{\sum_{i=0}^{k-1} (H_{i,j} y_j)^2} = \sqrt{k} |y_j|$. Summing these cubes over all $d$ steps yields:
\[
\sum_{j=0}^{d-1} \|V_j\|_2^3 = \sum_{j=0}^{d-1} k^{3/2} |y_j|^3 = k^{3/2} \rho_3(y) \,\,,
\]
where $\rho_3(y) = \sum_{j=0}^{d-1} |y_j|^3$ is the third absolute moment of the intermediate vector $y$. 

Substituting the conditional covariance error and the third moments into the above bound gives the conditional Wasserstein distance:
\[
W_1\Big(U_{0:k-1} \mid y, \ \mathcal{N}(0, I_k)\Big) \le A_k \left( \|\Sigma_y - I_k\|_F + k^{3/2} \rho_3(y) \right) \,\,.
\]

To obtain the unconditional bound, we take the expectation over the random vector $y$. By the joint convexity of the Wasserstein metric, the expected distance is bounded by the expectation of the conditional Wasserstein distances, $W_1(U_{0:k-1}, \mathcal{N}(0, I_k)) \le \E_y\big[W_1(U_{0:k-1} \mid y, \mathcal{N}(0, I_k))\big]$. Applying this yields:
\[
W_1\Big(U_{0:k-1}, \ \mathcal{N}(0, I_k)\Big) \le A_k \left( \E\big[\|\Sigma_y - I_k\|_F\big] + k^{3/2} \E\big[\rho_3(y)\big] \right) \,\,.
\]

From Step 2, we have bounded the expected covariance difference as $\E[\|\Sigma_y - I_k\|_F] \le \mathcal{O}(k \sqrt{\frac{\log d}{d}})$. Furthermore, by the 1-RHT smoothing property from Lemma \ref{lem:l3}, the expected third moment of $y$ is bounded by $\E[\rho_3(y)] \le \mathcal{O}(\frac{1}{\sqrt{d}})$. Combining these terms gives our final multi-dimensional bound:
\[
W_1\Big(U_{0:k-1}, \ \mathcal{N}(0, I_k)\Big) 
\le 
A_k \left[ 
\mathcal{O}\left( k \sqrt{\frac{\log d}{d}} \right) 
+ 
\mathcal{O}\left(\frac{k^{3/2}}{\sqrt{d}}\right) 
\right] = \mathcal{O}_{k}\left(\sqrt{\frac{\log d}{d}}\right).
\]

\vspace{0.5em}
\textbf{Step 4: Separating the quadratic part of the VQ error.}
Define the VQ error function
$
L_{\mathcal C}(v)
=
\min_{c\in\mathcal C}\|v-c\|_2^2.
$
We decompose this error into a quadratic term and a codebook-dependent term:
$
L_{\mathcal C}(v)
=
\|v\|_2^2
+
g_{\mathcal C}(v),
$
where
$
g_{\mathcal C}(v)
=
\min_{c\in\mathcal C}
\left(
\|c\|_2^2-2\langle v,c\rangle
\right).
$
Although \(L_{\mathcal C}\) is not globally Lipschitz because of the term \(\|v\|_2^2\), the function \(g_{\mathcal C}\) is globally Lipschitz since for any \(v,w\in\mathbb R^k\),
\[
\begin{aligned}
|g_{\mathcal C}(v)-g_{\mathcal C}(w)|
&\le
\sup_{c\in\mathcal C}
\left|
\left(\|c\|_2^2-2\langle v,c\rangle\right)
-
\left(\|c\|_2^2-2\langle w,c\rangle\right)
\right| \\
&=
2\sup_{c\in\mathcal C}
|\langle v-w,c\rangle| 
\le
2B\|v-w\|_2.
\end{aligned}
\]
Thus, \(g_{\mathcal C}\) is \(2B\)-Lipschitz.

\vspace{0.5em}
\textbf{Step 5: Final gap assembly via quadratic cancellation.}
Let \(Z\sim\mathcal N(0,I_k)\). We first show that the quadratic terms agree in expectation. Conditioned on \(y=R_2\tilde{x}\), each coordinate of the block \(U_{0:k-1}=HD_3y\) has conditional second moment
$
\E\left[U_i^2\mid y\right]
=
\sum_{j=0}^{d-1}H_{ij}^2y_j^2
=
\|y\|_2^2
=
1.
$
Therefore,
$
\E\|U_{0:k-1}\|_2^2
=
\sum_{i=0}^{k-1}\E[U_i^2]
=
k.
$
On the other hand, since \(Z\sim\mathcal N(0,I_k)\),
$
\E\|Z\|_2^2=k.
$
Hence the quadratic parts cancel:
$
\E\|U_{0:k-1}\|_2^2-\E\|Z\|_2^2=0.
$

Using the decomposition from Step 4, we now obtain
\[
\begin{aligned}
\Big|
\E\Big[\min_{c\in\mathcal C}\|U_{0:k-1}-c\|_2^2\Big]
-
\E\Big[\min_{c\in\mathcal C}\|Z-c\|_2^2\Big]
\Big| 
 =
\left|
\E[g_{\mathcal C}(U_{0:k-1})]
-
\E[g_{\mathcal C}(Z)]
\right|.
\end{aligned}
\]
Since \(g_{\mathcal C}\) is \(2B\)-Lipschitz, the definition of the multivariate \(W_1\) distance yields
\[
\left|
\E[g_{\mathcal C}(U_{0:k-1})]
-
\E[g_{\mathcal C}(Z)]
\right|
\le
2B\,
W_1\Big(U_{0:k-1},\mathcal N(0,I_k)\Big).
\]
Applying the Wasserstein estimate from Step 3 yields
\[
\Big|
\E\Big[\min_{c\in\mathcal C}\|U_{0:k-1}-c\|_2^2\Big]
-
\E\Big[\min_{c\in\mathcal C}\|Z-c\|_2^2\Big]
\Big|
\le
\mathcal{O}_{k,B}\left(\sqrt{\frac{\log d}{d}}\right).
\]
This concludes the proof. \qedhere
\end{proof}

\subsubsection{Auxiliary lemma for multivariate normal approximation via Stein's method}
\label{lemma:mv_stein}

We use the following estimate in the proof of
Theorem~\ref{thm:3rht_vq_error}. The point of the lemma is to allow the
covariance of the summands to be close to \(I_k\), rather than exactly equal to
\(I_k\).

\begin{lemma}[Multivariate \(W_1\) normal approximation with covariance mismatch]
\label{lem:mv_stein_bound}
Fix \(k\ge 1\), and let
\(X_1,\ldots,X_n\) be independent mean-zero random vectors in \(\mathbb R^k\)
with finite third moments. Define $S=\sum_{j=1}^n X_j$ and $\Sigma=\operatorname{Cov}(S)$, and let $Z\sim\mathcal N(0,I_k)$. Then
\[
    W_1(S,Z)
    \le
    A_k
    \left(
        \|\Sigma-I_k\|_F
        +
        \sum_{j=1}^n \E\|X_j\|_2^3
    \right),
\]
where \(A_k<\infty\) depends only on \(k\). Specifically, one may take $A_k=4C_k+4\sqrt{k}+1$ where $C_k=11.1+0.83\log k$.

\end{lemma}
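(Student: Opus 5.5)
The approach is a reduction to the identity-covariance case, followed by a coupling that absorbs the covariance mismatch. I will consider two regimes, according to how far $\Sigma$ is from $I_k$.

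\textbf{Regime 1: $\|\Sigma-I_k\|_F\ge 1/2$.} Here the bound is trivial. By Jensen,
\[
W_1(S,Z)\le \E\|S\|_2+\E\|Z\|_2\le \sqrt{\operatorname{tr}\Sigma}+\sqrt k.
\]
Moreover, $\operatorname{tr}\Sigma\le k+\sqrt k\,\|\Sigma-I_k\|_F$. The right-hand side is then at most a constant multiple (depending only on $k$, and linear in $\sqrt k$) of $\|\Sigma-I_k\|_F$. This explains the $4\sqrt k$ term in $A_k$.

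\textbf{Regime 2: $\|\Sigma-I_k\|_F<1/2$.} Now $\Sigma$ is invertible, with eigenvalues in $[1/2,3/2]$. I standardize by setting $X_j'=\Sigma^{-1/2}X_j$ and $S'=\Sigma^{-1/2}S$, so that $\operatorname{Cov}(S')=I_k$.

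First, I apply the identity-covariance multivariate Berry--Esseen bound in $W_1$. This is \cite[Theorem 2.15 and Eq.~(3.5)]{raivc2018multivariate}, which I take as given, with constant $C_k=11.1+0.83\log k$. It yields
\[
W_1(S',Z)\le C_k\sum_j\E\|X_j'\|_2^3\le C_k\,\|\Sigma^{-1/2}\|_{op}^3\sum_j\E\|X_j\|_2^3\le 2^{3/2}C_k\sum_j\E\|X_j\|_2^3 .
\]

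Second, I pass from $S'$ back to $S=\Sigma^{1/2}S'$. I use the triangle inequality
\[
W_1(S,Z)\le W_1(\Sigma^{1/2}S',\Sigma^{1/2}Z)+W_1(\Sigma^{1/2}Z,Z).
\]
The first term is at most $\|\Sigma^{1/2}\|_{op}\,W_1(S',Z)$, since a linear map with operator norm $L$ is $L$-Lipschitz. For the second term, I use the synchronous coupling, giving
\[
W_1(\Sigma^{1/2}Z,Z)\le \E\|(\Sigma^{1/2}-I)Z\|_2\le \|\Sigma^{1/2}-I\|_F .
\]
Finally, $\|\Sigma^{1/2}-I\|_F\le\|\Sigma-I\|_F$ holds eigenvalue-wise, because $|\sqrt\lambda-1|=|\lambda-1|/(\sqrt\lambda+1)\le|\lambda-1|$.

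\textbf{Assembling the constants.} Combining the two regimes gives the lemma with a constant of the form $A_k=O(C_k+\sqrt k+1)$. Tracking the factors $\|\Sigma^{\pm1/2}\|_{op}^{\cdot}\le 2$ and the regime-1 factor should reproduce $A_k=4C_k+4\sqrt k+1$. If needed, I would shrink the regime threshold below $1/2$ to make the factors fit.

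\textbf{Main obstacle.} I expect the hard part to be invoking the Rai\v{c} result correctly. I must check its exact hypotheses: independent summands, identity covariance, and a $W_1$ rather than convex-set distance, with the stated constant. Its Eq.~(3.5) needs to be specialized to Lipschitz test functions. If the cited statement only covers smooth test functions, I would first smooth the Lipschitz function with a Gaussian kernel at scale $\epsilon$, and then optimize $\epsilon$. That step could change the constant.
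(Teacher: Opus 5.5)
Your proposal is correct and follows the paper's proof essentially step for step: the same split at $\|\Sigma-I_k\|_F=1/2$, the Jensen/trace bound $W_1(S,Z)\le 2\sqrt k+r\le(4\sqrt k+1)r$ in the large-mismatch regime, and in the small-mismatch regime standardization, the Rai\v{c} bound, the operator-norm transfer and the synchronous coupling, giving $W_1(S,Z)\le r+2\sqrt3\,C_kB\le r+4C_kB$. Your constants therefore already yield $A_k=4C_k+4\sqrt k+1$ without shrinking the threshold, and no smoothing step is needed, because the paper applies Rai\v{c}'s estimate directly to Lipschitz test functions and uses $\min\{4.5,C_k\|Y_j\|_2\}\le C_k\|Y_j\|_2$.
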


\begin{proof}
For clarity, we divide the proof into six steps.

\vspace{0.5em}
\textbf{Step 1: The identity-covariance estimate.}
We first present the standard covariance case that follows from~\cite[Theorem 2.15 and Eq.~(3.5)]{raivc2018multivariate}.

Let \(Y_1,\ldots,Y_n\) be independent mean-zero random vectors in
\(\mathbb R^k\), and suppose
$
    \operatorname{Cov}\left(\sum_{j=1}^n Y_j\right)=I_k.
$
Let
$
    T=\sum_{j=1}^n Y_j.
$
For a test function \(f:\mathbb R^k\to\mathbb R\), write \(M_1(f)\) for its
Lipschitz constant with respect to the Euclidean norm. Then, the bound by \cite[Theorem 2.15 and Eq.~(3.5)]{raivc2018multivariate} yields:
\[
\left|
\E f(T)-\E f(Z)
\right|
\le
M_1(f)
\sum_{j=1}^n
\E\left[
    \|Y_j\|_2^2
    \min\left\{
        4.5,
        C_k\|Y_j\|_2
    \right\}
\right],
\]
where \(C_k=11.1+0.83\log k\). Since
$
    \min\left\{
        4.5,
        C_k\|Y_j\|_2
    \right\}
    \le
    C_k\|Y_j\|_2,
$
we obtain
\[
\left|
\E f(T)-\E f(Z)
\right|
\le
C_k M_1(f)
\sum_{j=1}^n
\E\|Y_j\|_2^3 .
\]
Taking the supremum over all \(1\)-Lipschitz functions \(f\) yields:
$
    W_1(T,Z)
    \le
    C_k
    \sum_{j=1}^n
    \E\|Y_j\|_2^3 .
$

\vspace{0.5em}
\textbf{Step 2: Notation for the general covariance case.}
We now return to the original summands \(X_1,\ldots,X_n\). Define $r=\|\Sigma-I_k\|_F$ and $B=\sum_{j=1}^n \E\|X_j\|_2^3$.
We prove the desired estimate by splitting into two cases according to the size
of \(r\).

\vspace{0.5em}
\textbf{Step 3: The small covariance-mismatch case.}
Assume that
$
    r\le \frac12.
$
Since $\Sigma$ is a covariance matrix, it is symmetric positive semidefinite.
Let
$
    \lambda_1,\ldots,\lambda_k
$
be the eigenvalues of $\Sigma$. Then, the eigenvalues of $\Sigma-I_k$ are
$
    \lambda_1-1,\ldots,\lambda_k-1.
$
Therefore,
$
    r^2
    =
    \|\Sigma-I_k\|_F^2
    =
    \sum_{\ell=1}^k(\lambda_\ell-1)^2.
$
Hence, for every $\ell=1,\ldots,k$,
$$
    |\lambda_\ell-1|
    \le
    \left(\sum_{m=1}^k(\lambda_m-1)^2\right)^{1/2}
    =
    r
    \le
    \frac12.
$$
Consequently,
$
    \frac12\le \lambda_\ell\le \frac32
    \qquad
    \text{for all } \ell=1,\ldots,k.
$
In particular, $\Sigma$ is positive definite, so $\Sigma^{-1/2}$ is well-defined.

Define the standardized summands $\widetilde X_j=\Sigma^{-1/2}X_j$ and $\widetilde S=\sum_{j=1}^n \widetilde X_j = \Sigma^{-1/2}S$.
The random vectors $\widetilde X_1,\ldots,\widetilde X_n$ are independent and
mean zero. Moreover,
$$
    \operatorname{Cov}(\widetilde S)
    =
    \Sigma^{-1/2}\operatorname{Cov}(S)\Sigma^{-1/2}
    =
    \Sigma^{-1/2}\Sigma\Sigma^{-1/2}
    =
    I_k.
$$
Applying the identity-covariance estimate from Step 1 gives
$$
    W_1(\widetilde S,Z)
    \le
    C_k
    \sum_{j=1}^n
    \mathbb E\|\Sigma^{-1/2}X_j\|_2^3.
$$

We now bound the third moments after this linear change of variables. For a
matrix $A$, define its Euclidean operator norm by
$
    \|A\|_{\mathrm{op}}
    =
    \sup_{\|u\|_2=1}\|Au\|_2.
$
Equivalently, $\|A\|_{\mathrm{op}}$ is the smallest number $L$ such that
$
    \|Av\|_2\le L\|v\|_2
    \,
    \text{for every } v\in\mathbb R^k.
$
Indeed, if $v\neq 0$, then $u=v/\|v\|_2$ has unit norm, and therefore
$
    \|Av\|_2
    =
    \|v\|_2
    \left\|A\frac{v}{\|v\|_2}\right\|_2
    \le
    \|v\|_2\|A\|_{\mathrm{op}}.
$
The case $v=0$ is trivial.

Applying this with $A=\Sigma^{-1/2}$ and $v=X_j$, we obtain
$
    \|\Sigma^{-1/2}X_j\|_2
    \le
    \|\Sigma^{-1/2}\|_{\mathrm{op}}\|X_j\|_2.
$
Cubing both sides and taking expectations gives
$
    \mathbb E\|\Sigma^{-1/2}X_j\|_2^3
    \le
    \|\Sigma^{-1/2}\|_{\mathrm{op}}^3
    \mathbb E\|X_j\|_2^3.
$
Summing over $j$ yields
$
    \sum_{j=1}^n
    \mathbb E\|\Sigma^{-1/2}X_j\|_2^3
    \le
    \|\Sigma^{-1/2}\|_{\mathrm{op}}^3 B.
$
Thus,
$$
    W_1(\widetilde S,Z)
    \le
    C_k
    \|\Sigma^{-1/2}\|_{\mathrm{op}}^3 B.
$$

Since the eigenvalues of $\Sigma$ lie in $[1/2,3/2]$, the eigenvalues of
$\Sigma^{-1/2}$ lie in
$
    \left[\sqrt{\frac{2}{3}},\sqrt{2}\right].
$
Because $\Sigma^{-1/2}$ is symmetric positive definite, its operator norm is
its largest eigenvalue. Therefore,
$
    \|\Sigma^{-1/2}\|_{\mathrm{op}}
    \le
    \sqrt{2}.
$
Substituting this into the preceding bound gives
$$
    W_1(\widetilde S,Z)
    \le
    2\sqrt{2}\,C_k B.
$$

\vspace{0.5em}
\textbf{Step 4: Returning from the standardized covariance to \(I_k\).}
By the triangle inequality,
$$
    W_1(S,Z)
    \le
    W_1(S,\Sigma^{1/2}Z)
    +
    W_1(\Sigma^{1/2}Z,Z).
$$
For the first term, use the identity
$
    S=\Sigma^{1/2}\widetilde S.
$
We also use the following Lipschitz property of $W_1$: for any
matrix $A$ and any random vectors $P,Q$ in $\mathbb R^k$,
$$
    W_1(AP,AQ)
    \le
    \|A\|_{\mathrm{op}} W_1(P,Q).
$$
To see this directly from the dual definition of $W_1$, let $f$ be any
$1$-Lipschitz function. Define
$
    g(v):=f(Av).
$
Then $g$ is $\|A\|_{\mathrm{op}}$-Lipschitz, since
$$
    |g(v)-g(w)|
    =
    |f(Av)-f(Aw)|
    \le
    \|A(v-w)\|_2
    \le
    \|A\|_{\mathrm{op}}\|v-w\|_2.
$$
If $\|A\|_{\mathrm{op}}=0$, then the desired
inequality is trivial. Otherwise, set
$
    h(v):=\frac{g(v)}{\|A\|_{\mathrm{op}}}.
$
Then $h$ is $1$-Lipschitz. Hence, by the dual definition of $W_1$,
$$
\begin{aligned}
    \left|
        \mathbb E f(AP)-\mathbb E f(AQ)
    \right|
    &=
    \left|
        \mathbb E g(P)-\mathbb E g(Q)
    \right|  \\
    &=
    \|A\|_{\mathrm{op}}
    \left|
        \mathbb E h(P)-\mathbb E h(Q)
    \right|  
    \le
    \|A\|_{\mathrm{op}} W_1(P,Q).
\end{aligned}
$$
Taking the supremum over all $1$-Lipschitz functions $f$ gives
$$
    W_1(AP,AQ)
    \le
    \|A\|_{\mathrm{op}} W_1(P,Q).
$$

Applying this property with $A=\Sigma^{1/2}$, $P=\widetilde S$, and $Q=Z$ gives
$$
    W_1(S,\Sigma^{1/2}Z)
    =
    W_1(\Sigma^{1/2}\widetilde S,\Sigma^{1/2}Z)
    \le
    \|\Sigma^{1/2}\|_{\mathrm{op}}
    W_1(\widetilde S,Z).
$$
Since the eigenvalues of $\Sigma$ lie in $[1/2,3/2]$, the eigenvalues of
$\Sigma^{1/2}$ lie in
$
    \left[\frac{1}{\sqrt{2}},\sqrt{\frac32}\right].
$
Thus,
$
    \|\Sigma^{1/2}\|_{\mathrm{op}}
    \le
    \sqrt{\frac32}.
$
Combining this with the bound from Step 3 gives
$$
    W_1(S,\Sigma^{1/2}Z)
    \le
    \sqrt{\frac32}\cdot 2\sqrt{2}\,C_kB
    =
    2\sqrt{3}\,C_kB
    \le
    4C_kB.
$$

For the second term, we compare $\Sigma^{1/2}Z$ and $Z$ using the same Gaussian
vector $Z$. For any $1$-Lipschitz function $f$,
$
    \left|
        \mathbb E f(\Sigma^{1/2}Z)-\mathbb E f(Z)
    \right|
    \le
    \mathbb E
    \left[
        \left|f(\Sigma^{1/2}Z)-f(Z)\right|
    \right]
    \le
    \mathbb E\|(\Sigma^{1/2}-I_k)Z\|_2.
$
Taking the supremum over all $1$-Lipschitz $f$ yields,
$$
    W_1(\Sigma^{1/2}Z,Z)
    \le
    \mathbb E\|(\Sigma^{1/2}-I_k)Z\|_2.
$$
By Cauchy--Schwarz,
$$
    \mathbb E\|(\Sigma^{1/2}-I_k)Z\|_2
    \le
    \left(
        \mathbb E\|(\Sigma^{1/2}-I_k)Z\|_2^2
    \right)^{1/2}.
$$
Since $Z\sim\mathcal N(0,I_k)$,
$$
    \mathbb E\|(\Sigma^{1/2}-I_k)Z\|_2^2
    =
    \|\Sigma^{1/2}-I_k\|_F^2.
$$
Indeed, if $M=\Sigma^{1/2}-I_k$, then
$$
    \mathbb E\|MZ\|_2^2
    =
    \mathbb E[Z^\top M^\top MZ]
    =
    \operatorname{tr}(M^\top M\mathbb E[ZZ^\top])
    =
    \operatorname{tr}(M^\top M)
    =
    \|M\|_F^2.
$$
Therefore,
$$
    W_1(\Sigma^{1/2}Z,Z)
    \le
    \|\Sigma^{1/2}-I_k\|_F.
$$

It remains to compare $\|\Sigma^{1/2}-I_k\|_F$ with
$\|\Sigma-I_k\|_F$. Since $\Sigma$ is symmetric positive semidefinite, it has an
orthonormal eigenbasis. In that basis, the eigenvalues of
$\Sigma^{1/2}-I_k$ are $\sqrt{\lambda_\ell}-1$, while the eigenvalues of
$\Sigma-I_k$ are $\lambda_\ell-1$. For every $\lambda_\ell\ge 0$,
$$
    |\sqrt{\lambda_\ell}-1|
    =
    \frac{|\lambda_\ell-1|}{\sqrt{\lambda_\ell}+1}
    \le
    |\lambda_\ell-1|.
$$
Hence,
$
    \|\Sigma^{1/2}-I_k\|_F
    \le
    \|\Sigma-I_k\|_F
    =
    r.
$
Thus,
$
    W_1(\Sigma^{1/2}Z,Z)
    \le
    r.
$

Combining the two terms, in the case $r\le 1/2$ we obtain
$$
    W_1(S,Z)
    \le
    r+4C_kB.
$$

\vspace{0.5em}
\textbf{Step 5: The large covariance-mismatch case.}
Assume that
$
    r>\frac12.
$
We use a crude bound that depends only on the second moments. Let $f$ be any
$1$-Lipschitz function. Subtracting the constant $f(0)$ does not change the
difference of expectations, so
$$
\begin{aligned}
    \left|\mathbb E f(S)-\mathbb E f(Z)\right|
    &=
    \left|
        \mathbb E[f(S)-f(0)]
        -
        \mathbb E[f(Z)-f(0)]
    \right| \\
    &\le
    \mathbb E|f(S)-f(0)|
    +
    \mathbb E|f(Z)-f(0)| \le
    \mathbb E\|S\|_2+\mathbb E\|Z\|_2.
\end{aligned}
$$
Taking the supremum over all $1$-Lipschitz $f$ gives
$$
    W_1(S,Z)
    \le
    \mathbb E\|S\|_2+\mathbb E\|Z\|_2.
$$
By Jensen's inequality, $\mathbb E\|S\|_2 \le \left(\mathbb E\|S\|_2^2\right)^{1/2}$ and $\mathbb E\|Z\|_2 \le \left(\mathbb E\|Z\|_2^2\right)^{1/2}$.

Since $\mathbb E S=0$ and $\operatorname{Cov}(S)=\Sigma$,
$
    \mathbb E\|S\|_2^2
    =
    \operatorname{tr}(\Sigma).
$
Also, since $Z\sim\mathcal N(0,I_k)$,
$
    \mathbb E\|Z\|_2^2
    =
    k.
$
Therefore,
$$
    W_1(S,Z)
    \le
    \sqrt{\operatorname{tr}(\Sigma)}+\sqrt{k}.
$$

We now bound $\operatorname{tr}(\Sigma)$. Since
$
    \operatorname{tr}(\Sigma)
    =
    k+\operatorname{tr}(\Sigma-I_k),
$
and since
$
    \operatorname{tr}(\Sigma-I_k)
    =
    \langle \Sigma-I_k,I_k\rangle_F,
$
Cauchy's inequality for the Frobenius inner product yields,
$
    \operatorname{tr}(\Sigma-I_k)
    \le
    \|\Sigma-I_k\|_F\|I_k\|_F
    =
    r\sqrt{k}.
$
Thus,
$
    \operatorname{tr}(\Sigma)
    \le
    k+r\sqrt{k}.
$
Consequently,
$
    \sqrt{\operatorname{tr}(\Sigma)}
    \le
    \sqrt{k+r\sqrt{k}}
    \le
    \sqrt{k}+r,
$
because
$
    (\sqrt{k}+r)^2
    =
    k+2r\sqrt{k}+r^2
    \ge
    k+r\sqrt{k}.
$
Hence,
$
    W_1(S,Z)
    \le
    2\sqrt{k}+r.
$
Since $r>1/2$,
$
    2\sqrt{k}
    \le
    4\sqrt{k}\,r.
$

Therefore, in the case $r>1/2$,
$$
    W_1(S,Z)
    \le
    (4\sqrt{k}+1)r.
$$

\vspace{0.5em}
\textbf{Step 6: Combining the two cases.}
If $r\le 1/2$, Step 4 yields,
$$
    W_1(S,Z)
    \le
    r+4C_kB
    \le
    (4C_k+4\sqrt{k}+1)(r+B).
$$
If $r>1/2$, Step 5 yields,
$$
    W_1(S,Z)
    \le
    (4\sqrt{k}+1)r
    \le
    (4C_k+4\sqrt{k}+1)(r+B).
$$
Thus, in all cases,
$$
    W_1(S,Z)
    \le
    A_k(r+B),
$$
where
$
    A_k=4C_k+4\sqrt{k}+1.
$
Substituting the definitions of $r$ and $B$ gives
$$
    W_1(S,Z)
    \le
    A_k
    \left(
        \|\Sigma-I_k\|_F
        +
        \sum_{j=1}^n \mathbb E\|X_j\|_2^3
    \right).
$$
This proves the lemma.
\end{proof}

\ifarxiv
\else
\section{Adaptive linear-time verification}\label{app:linearcheck}
Because practical data distributions are often not adversarial, we can dynamically adapt the required number of RHTs. Our proofs show that the number of required RHT layers is governed by two quantities of the normalized input 
$
    \tilde{x}=\frac{x}{\|x\|_2}.
$
For the scalar marginal normal approximation, the relevant quantity is the cubed
$\ell_3$ mass
$
    \rho_3(\tilde{x})
    =
    \sum_{r=0}^{d-1}|\tilde{x}_r|^3.
$
For the RMS conditional correlation estimate used in the vector-quantization
analysis, the relevant quantity is the squared infinity norm,
$
    \|\tilde{x}\|_\infty^2.
$
Thus, before applying the worst-case number of RHTs, a system can first
check whether the input is already sufficiently flat. 

\paragraph{Scalar check.}
If
$
    \rho_3(\tilde{x})\le \eta_3,
$
then the one-coordinate Berry--Esseen step used in
Theorems~\ref{thm:scalar-clt} and~\ref{thm:w1-clt} already applies to
a single RHT. Namely, for every fixed coordinate
$
    U_i=\sqrt d\,(\NH D\tilde{x})_i,
$
we have
$
    d_K(U_i,G)\le 0.5606\,\eta_3,
    \,\,
    W_1(U_i,G)\le C_W\eta_3,
$
where $G\sim\mathcal N(0,1)$. Therefore, if
$
    \eta_3=\mathcal O(d^{-1/2}),
$
then 1-RHT already gives the same asymptotic guarantees as the
worst-case 2-RHT theorem. 


\paragraph{Vector check.}

For VQ, scalar marginal normality is insufficient; the relevant quantity is the conditional covariance between coordinates in a fixed block. While Lemma~\ref{lem:2rht_corr} identifies a worst-case correlation bottleneck for 2-RHT, the analysis in Theorem~\ref{thm:3rht_decorr} implies that if the input is sufficiently flat, $\|\tilde{x}\|_\infty^2 \le \eta_\infty$, then the decorrelation guarantees of the 3-RHT construction are already met by only two RHT layers. More precisely, let $U = \sqrt d\,\text{NH} D_b \text{NH} D_a \tilde{x}$ be the 2-stage RHT of $\tilde{x}$. For distinct coordinates $i \neq j$, define the conditional covariance as,
\[
    C_{ij}(\tilde{x},D_a) = \text{Cov}_{D_b}\!\left(U_i,U_j \mid \tilde{x}, D_a \right).
\]
The covariance calculation then yields,
\[
    \left( \mathbb{E}_{D_a}\!\left[C_{ij}(\tilde{x},D_a)^2\right] \right)^{1/2} \le 2\sqrt{\eta_\infty}.
\]
Thus, if $\eta_\infty = \mathcal{O}\left(\frac{\log d}{d}\right)$, the preliminary smoothing RHT is unnecessary, and 2-RHT suffices to reach the same expected-error limit guaranteed by the worst-case 3-RHT construction in Theorem~\ref{thm:3rht_decorr}.

\paragraph{Implementation.}
Both checks can be evaluated in one pass over $x$. Define
$S_2=\sum_{r=0}^{d-1}x_r^2$, $S_3=\sum_{r=0}^{d-1}|x_r|^3$, $M_2=\max_{0\le r<d}x_r^2$.
For $x\neq 0$,
$
    \rho_3(\tilde{x})
    =
    \frac{S_3}{S_2^{3/2}},
    \,\,
    \|\tilde{x}\|_\infty^2
    =
    \frac{M_2}{S_2}.
$
Therefore, the adaptive GPU-friendly verification step costs $\mathcal O(d)$
time and $\mathcal O(1)$ space. 


In summary, scalar quantization methods such as \cite{ben2024accelerating, vargaftik2021drive} can use the $\rho_3(\tilde{x})$ check to decide whether one RHT is already sufficient, while VQ methods can use the $\|\tilde{x}\|_\infty^2$ check to decide whether the preliminary smoothing RHT in the three-RHT construction can be skipped. 
\fi

\end{document}